\definecolor{darkblue}{rgb}{0,0,0.75}
\newtheorem{pr}{Proposition}
\newtheorem{thm}[pr]{Theorem}
\newtheorem{lm}[pr]{Lemma}
\newtheorem{df}[pr]{Definition}
\newtheorem{crl}[pr]{Corollary}
\newcommand{\defemph}[1]{\textbf{#1}}
\newcommand{\innerprod}[2]{\langle #1,#2\rangle}
\newcommand{\norm}[1]{\left\|#1\right\|}
\newcommand{\mapping}[3]{#1\!: #2 \to #3}
\newcommand{\R}{\mathbb{R}}
\newcommand{\N}{\mathbb{N}}
\newcommand{\bigO}[1]{\mathcal{O}\left( #1 \right)}
\newcommand{\loss}{\ell}
\newcommand{\cummloss}[2]{L_{#1}(#2)}
\newcommand{\lossbound}[2]{\textbf{L}_{#1}(#2)}
\newcommand{\cummcomm}[2]{C_{#1}(#2)}
\newcommand{\commcost}[1]{c_{#1}}
\newcommand{\round}{t}
\newcommand{\totalRounds}{T}
\newcommand{\learner}{i}
\newcommand{\totalLearners}{m}
\newcommand{\uprule}{\varphi}
\newcommand{\syncop}{\sigma}
\newcommand{\divergence}{\delta}
\newcommand{\divThreshold}{\Delta}
\newcommand{\compErr}{\epsilon}
\newcommand{\hilbertSpace}{\mathcal{H}}
\newcommand{\kernel}{k}
\newcommand{\model}{f}
\newcommand{\modelconf}{\mathbf{\model}}
\newcommand{\avgmodel}{\overline{\modelconf}}
\newcommand{\SVs}{S}
\newcommand{\coeffs}{\alpha}
\newcommand{\modelApprox}{\widetilde{\model}}
\newcommand{\refModel}{r}
\newcommand{\onlineAlgo}{\mathcal{A}}
\newcommand{\protocol}{\Pi}
\newcommand{\dynProt}{\mathcal{D}}
\newcommand{\periodProt}{\mathcal{P}}
\newcommand{\contProt}{\mathcal{C}}
\newcommand{\dist}{P} 
\newcommand{\sampleSpace}{X}
\newcommand{\outputSpace}{Y}
\newcommand{\sample}{x}
\newcommand{\truelabel}{y}
\newcommand{\inputSpace}{\sampleSpace\times\outputSpace}
\newcommand{\inputval}[2]{\left( \sample_{#1}^{#2}, \truelabel_{#1}^{#2} \right)}
\title{Communication-Efficient Distributed Online Learning with Kernels}
\date{}
\author{
Michael Kamp\inst{1}{~\tiny\Letter} \and
Sebastian Bothe\inst{1} \and
Mario Boley\inst{2} \and 
Michael Mock\inst{1}
}
\institute{Fraunhofer IAIS,~Sankt Augustin, Germany\\\email{<name>.<surname>@iais.fraunhofer.de}
\and
Saarland University, \email{mboley@mmci.uni-saarland.de}
}
\begin{document}

\maketitle

\begin{abstract}
We propose an efficient distributed online learning protocol for low-latency real-time services. It extends a previously presented protocol to kernelized online learners that represent their models by a support vector expansion. While such learners often achieve higher predictive performance than their linear counterparts, communicating the support vector expansions becomes inefficient for large numbers of support vectors. The proposed extension allows for a larger class of online learning algorithms---including those alleviating the problem above through model compression.
In addition, we characterize the quality of the proposed protocol by introducing a novel criterion that requires the communication to be bounded by the loss suffered.
\end{abstract}

\section{Introduction}
We consider the problem of distributed online learning for low-latency real-time services~\cite{dekel/jmlr/2012,kamp2014communication}. In this scenario, a learning system of $\totalLearners\in\N$ connected local learners provides a real-time prediction service on multiple dynamic data streams. 
In particular, we are interested in generic distributed online learning protocols that treat concrete learning algorithms as a black-box. 
The goal of such a protocol is to provide, in a communication efficient way, a service quality similar to a serial setting in which all examples are processed at a central location.
While such an optimal predictive performance can be trivially achieved by centralizing all data, the required continuous communication usually exceeds practical limits (e.g., bandwidth constraints~\cite{barroso2013datacenter}, latency~\cite{yuan2013real, heires2010budgeting}, or battery power~\cite{predd2007distributed, deligiannakis2008bandwidth}). 
Similarly, communication limits can be satisfied trivially by letting all local learners work in isolation. However, this usually comes with a loss of service quality that increases with the number of local learners.

In previous work, we presented a protocol that effectively reduces communication while providing strict loss bounds for a class of algorithms that perform loss-proportional convex updates of linear models~\cite{kamp2014communication}. 
That is, algorithms that update linear models in the direction of a convex set with a magnitude proportional to the instantaneous loss (e.g., Stochastic Gradient Descent~\cite{boyd2004convex}, or Passive Aggressive~\cite{crammer2006online}).
The protocol is able to cease communication as soon as no loss is suffered anymore. However, for most realistic problems this cannot be achieved by linear models. Thus, a more complex hypothesis class is desirable that enables the learners to achieve zero loss and thus reach quiescence. 

Kernelized online learning algorithms can provide such an extended hypothesis class, but practical versions of these algorithms do not perform loss-proportional convex updates (e.g.,~\cite{kivinen2004online,orabona2009bounded,wang2010online}).
Therefore, in this paper we extend the class of algorithms to \emph{approximately} loss-proportional convex updates (Sec.~\ref{sec:dynProtocol}). This relaxation is particularly crucial for kernelized online learners for streams that represent the model by its support vector expansion. These learners use this relaxation in order to reduce the number of support vectors, since otherwise a monotonically increasing model size would render them infeasible in streaming settings.

Also, for the first time we characterize the quality of the proposed protocol by introducing a novel criterion for efficient protocols that requires a strict loss bound and ties the loss to the allowed amount of communication. In particular, the criterion implies that the communication vanishes whenever the loss approaches zero. We bound the loss and communication of the proposed protocol and show for which class of learning algorithms it fulfills the efficiency criterion (Sec.~\ref{sec:perfGuarantees}).
\begin{figure}[t!]
	\centering
	\subfigure[]{\label{fig:xorExampleError}\includegraphics[width=6.0cm]{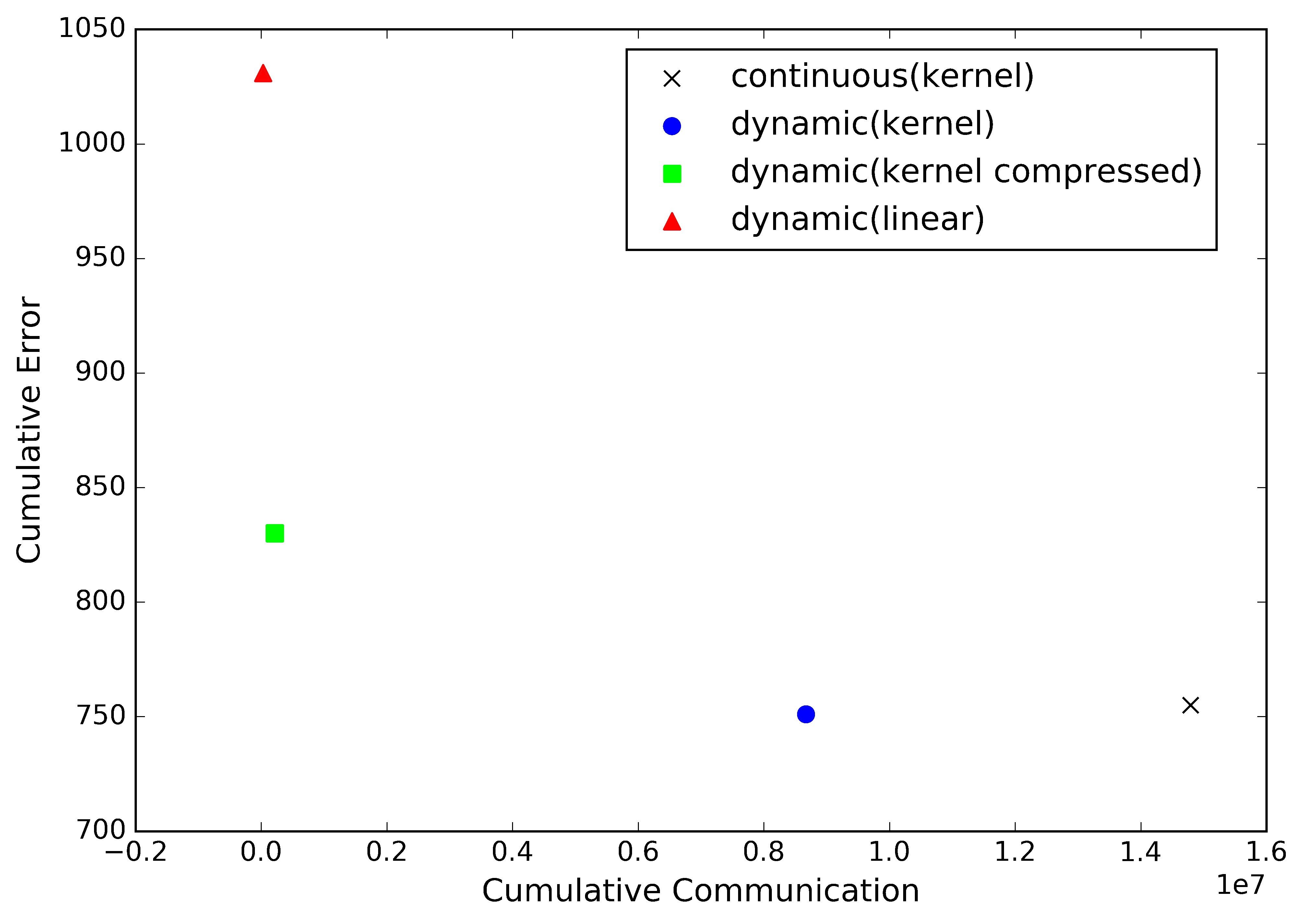}}\hfill%
	\subfigure[]{\label{fig:xorExampleComm}\includegraphics[width=6.0cm]{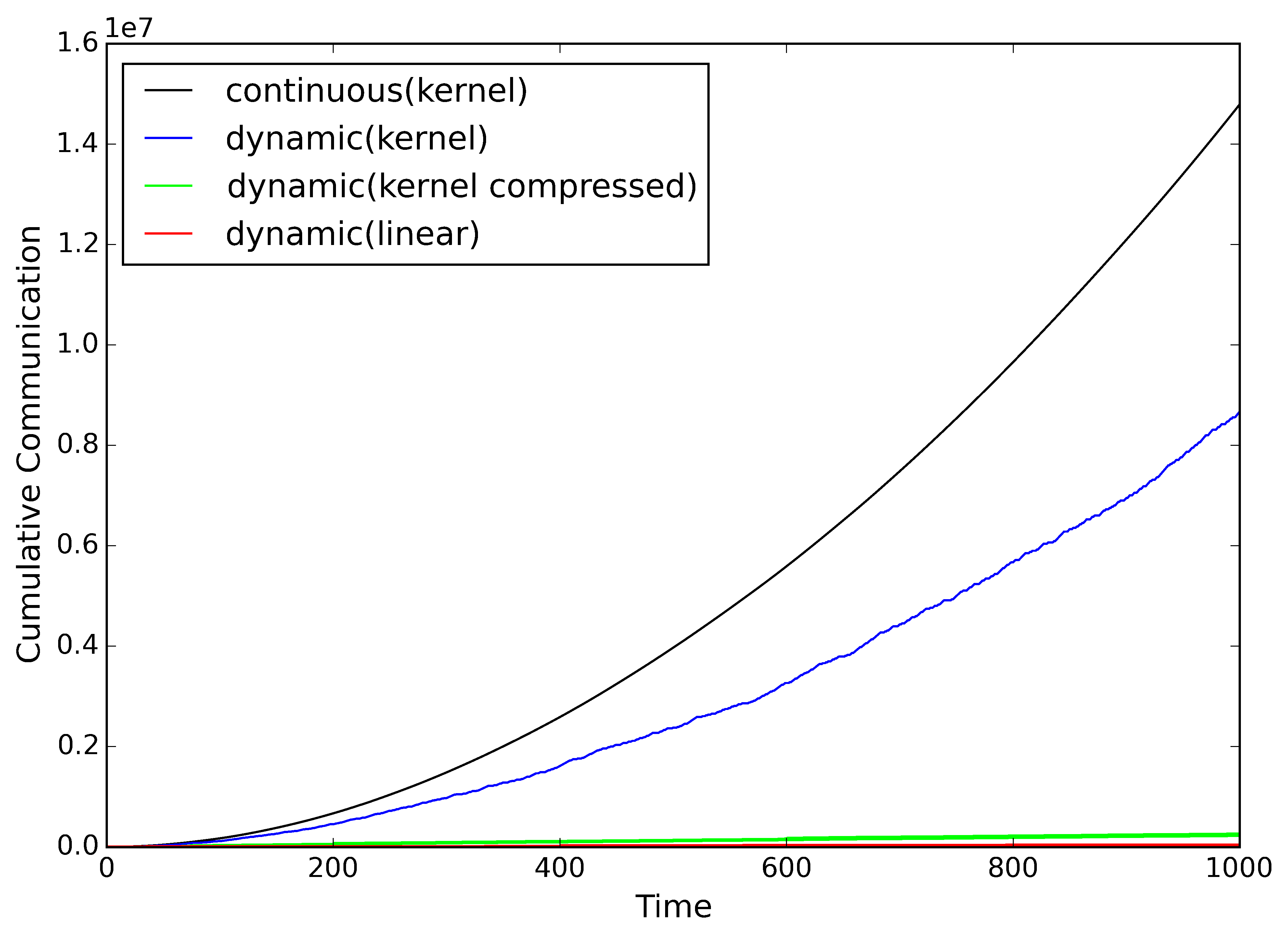}}\hfill%
	\caption{
		\subref{fig:xorExampleError} Trade-off between cumulative error and cumulative communication, and \subref{fig:xorExampleComm} cumulative communication over time of a distributed learning system using the proposed protocol. The learning task is classifying instances from the UCI SUSY dataset with $4$ learners, each processing $1000$ instances. Parameters of the learners are optimized on a separate set of $200$ instances per learner. 
	}		
	\label{fig:xorExample}
\end{figure}
While the strict loss bound required in our criterion can be achieved by periodically communicating protocols~\cite{mcdonald2009efficient, dekel/jmlr/2012}, their communication never vanishes, independent of their loss, which is also required for efficiency. 
By communicating only when it significantly improves the service quality, our protocol achieves similar service quality as any periodically communicating protocol while communicating less by a factor depending on its in-place loss. 

To further amplify this advantage, we apply methods from serial kernelized in-stream learning approaches. These approaches reduce the number of support vectors, e.g., by truncating individual support vectors with small weights~\cite{kivinen2004online}, or by projecting a single support vector on the span of the remaining ones~\cite{orabona2009bounded, wang2010online}. 
%

We illustrate the impact of the choice of the hypothesis class on the predictive performance and communication as well as the impact of model compression on an example dataset in Fig.~\ref{fig:xorExample}. In this example, we predicted the class of instances drawn from the SUSY dataset from the UCI machine learning repository~\cite{Lichman:2013}. The learning systems using linear models continuously suffer loss resulting in a large cumulative error, but since the linear models are small compared to support vector expansions, the cumulative communication is small. A continuously synchronizing protocol using support vector expansions has a significantly smaller loss at the cost of very high communication, since each synchronization requires to send models with a growing number of support vectors. Using the proposed dynamic protocol, this amount of communication can be reduced without losing in prediction quality. In addition, when using model compression the communication can be further reduced to an amount similar to the linear model, but at the cost of prediction quality. 

We further discuss the behavior of our protocol with respect to the trade-off between predictive performance and communication, and point out the strengths and weaknesses of the protocol in Sec.~\ref{sec:discussion}.
%

%

\section{Distributed Online Learning with Kernels}
\label{sec:dynProtocol}
In this section, we provide preliminaries and describe the protocol, extend it from linear function spaces to kernel Hilbert spaces, and provide an effectiveness criterion for distributed online learning. For that, we consider \defemph{distributed online learning protocols} $\protocol=(\onlineAlgo,\syncop)$ that run an online learning algorithm $\onlineAlgo$ on a distributed system of $\totalLearners\in\N$ local learners and exchange information between these learners using a synchronization operator $\syncop$. 

\paragraph{Preliminaries:}
The \defemph{online learning algorithm} $\onlineAlgo=(\hilbertSpace,\uprule, \loss)$ run at each \defemph{local learner} ${\learner\in[\totalLearners]}$ maintains a \defemph{local model} $\model^{\learner}\in\hilbertSpace$ from a function space $\hilbertSpace$ using an update rule $\uprule$ and a loss function $\loss$. 
That is, at each time point $\round\in\N$, each learner $\learner$ observes an individual input $\inputval{\round}{\learner}$ drawn independently from a time-variant distribution ${\mapping{\dist_\round}{\inputSpace}{[0,1]}}$ over an input space $\inputSpace$ . 
Based on this input and the local model, the local learner provides a service whose quality is measured by the \defemph{loss function} ${\mapping{\loss}{\hilbertSpace\times\inputSpace}{\R_+}}$. 
After providing the service, the local learner updates its local model using the \defemph{update rule} ${\mapping{\uprule}{\hilbertSpace\times\inputSpace}{\hilbertSpace}}$ in order to minimize the cumulative loss. 
%
The \defemph{synchronization operator} $\mapping{\syncop}{\hilbertSpace^\totalLearners}{\hilbertSpace^\totalLearners}$ transfers the current \defemph{model configuration} ${\modelconf=\left(\model^1,\dots,\model^\totalLearners\right)}$ of $\totalLearners$ local models to the synchronized configuration $\syncop(\modelconf)$. In the following, we recapitulate the dynamic protocol presented in~\cite{kamp2014communication} as well as two baseline protocols, i.e., a continuously and a periodic protocol. 

Given an online learning algorithm $\onlineAlgo$, the \defemph{periodic protocol} $\periodProt=(\onlineAlgo,\syncop_b)$ synchronizes every $b\in\N$ time steps the current model configuration ${\modelconf}$ by replacing all local models by their joint \defemph{average} 
$
\avgmodel=\sfrac{1}{\totalLearners}\sum_{\learner=1}^{\totalLearners}\model^\learner
$. That is, the synchronization operator is given by
\[
\syncop_b(\modelconf_t)=	\begin{cases}
									\left(\avgmodel_\round,\dots,\avgmodel_\round\right), &\text{ if } b \mid \round\\
									\enspace\modelconf_\round = (\model_\round^1,\dots,\model_\round^\totalLearners), &\text{ otherwise}\\
								\end{cases}\enspace .
\]
A special case of this is the \defemph{continuous protocol} $\contProt=(\onlineAlgo,\syncop_1)$ that continuously synchronizes every round, i.e., $
\syncop_1\left(\modelconf\right)=\left(\avgmodel,\dots,\avgmodel\right)
$.

The \defemph{dynamic protocol} $\dynProt=(\onlineAlgo,\syncop_\divThreshold)$ synchronizes the local learners using a \defemph{dynamic operator} $\syncop_{\divThreshold}$~\cite{kamp2014communication}. This operator only communicates when the \defemph{model divergence} 
\begin{equation}
\divergence(\modelconf)=\frac{1}{\totalLearners}\sum_{\learner=1}^{\totalLearners}\norm{\model^{\learner}-\avgmodel}^2
\label{eq:divergence}
\end{equation}
exceeds a \defemph{divergence threshold} $\divThreshold$. 
That is, the dynamic averaging operator is defined as
\[
\syncop_\Delta(\modelconf_\round)=\begin{cases}(\overline{\modelconf}_\round,\dots,\overline{\modelconf}_\round), &\text{ if } \divergence(\modelconf_\round)>\divThreshold\\
\modelconf_\round, &\text{ otherwise}\\
\end{cases}\enspace .
\]
In order to decide when to communicate, each local learner $\learner\in [\totalLearners]$  monitors the \defemph{local condition} $\|\model_{\round}^{\learner}-\refModel_\round\|^2\leq\Delta$ for a \defemph{reference model} $\refModel_\round\in\hilbertSpace$ that is common among all learners (see~\cite{keren2012shape,sharfman/tods/2007,gabel/IPDPS/2014,giatrakos2012prediction} for a more general description of this method). The local conditions guarantee that if none of them is violated, the divergence does not exceed the threshold $\divThreshold$. 
The closer the reference model is to the true average of local models, the tighter are the local conditions. 
Generally, the first choice for the reference model is the average model from the last synchronization step. Note, however, that there are several refinements of this choice that can be used in practice to further reduce communication.
%
%

\paragraph{Efficiency Criterion:}
In the following, we introduce performance measures in order to analyze the dynamic protocol and compare it to the continuous and periodic protocols. We measure the predictive performance of a distributed online learning system until time $\totalRounds\in\N$ by its cumulative loss
\[
\cummloss{}{\totalRounds,\totalLearners}=\sum_{\round=1}^\totalRounds\sum_{\learner=1}^{\totalLearners} \loss(\model_{\round}^{\learner},\inputval{\round}{\learner})\enspace .
\]
Performance guarantees are typically given by a \defemph{loss bound} $\lossbound{}{\totalRounds,\totalLearners}$, i.e., for all possible input sequences it holds that $\cummloss{}{\totalRounds,\totalLearners}\leq \lossbound{}{\totalRounds,\totalLearners}$. These bounds can be defined with respect to a sequence of reference models, in which case they are referred to as (shifting) \defemph{regret bounds}.

We measure its performance in terms of communication by its cumulative communication
\[
\cummcomm{}{\totalRounds,\totalLearners}=\sum_{\round=1}^{\totalRounds}\commcost{}(\modelconf_{\round})\enspace ,
\]
where $\mapping{\commcost{}}{\hilbertSpace^\totalLearners}{\N}$ measures the number of bytes required by the learning protocol to synchronize models $\modelconf_\round=\left(\model^1_\round,\dots,\model^\totalLearners_\round\right)$ at time $\round$.

There is a natural trade-off between communication and loss of a distributed online learning system. On the one hand, a loss similar to a serial setting can be trivially achieved by continuous synchronization. On the other hand, communication can be entirely omitted. The trade-off for these two extreme protocols can be easily determined: if the cumulative loss of an online learning algorithm $\onlineAlgo$ is bounded by $\lossbound{\onlineAlgo}{\totalRounds}$, the loss of a permanently centralizing system with $\totalLearners$ local learners running $\onlineAlgo$ is bounded by $\lossbound{\contProt}{\totalRounds, \totalLearners} = \lossbound{\onlineAlgo}{\totalLearners\totalRounds}$, i.e., the loss bound of a serial online learning algorithm processing $\totalLearners\totalRounds$ inputs. The protocol transmits $\bigO{\totalLearners}$ messages of size up to $\bigO{\totalRounds}$ in every of the $\totalRounds$ points in time. At the same time, the loss of a distributed system without any synchronization is bounded by $\lossbound{}{\totalRounds,\totalLearners} = \totalLearners \lossbound{\onlineAlgo}{\totalRounds}$, whereas the communication is $\cummcomm{}{\totalRounds}=0$.

The communication bound of an adaptive protocol should only depend on $\lossbound{\onlineAlgo}{\totalRounds}$ and not on $\totalRounds$, while at the same time retaining the loss bound of the serial setting. In the following definition we formalize this in order to provide a strong criterion for effectiveness of distributed online learning protocols.
\begin{df}
	A distributed online learning protocol $\protocol=(\onlineAlgo,\syncop)$ processing $\totalLearners\totalRounds$ inputs is \defemph{consistent} if it retains the loss bound of the serial online learning algorithm $\onlineAlgo$, i.e., 
	\[
	\lossbound{\protocol}{\totalRounds,\totalLearners}\in \bigO{ \lossbound{\onlineAlgo}{\totalLearners\totalRounds} }\enspace .
	\]
	The protocol is \defemph{adaptive} if its communication bound is linear in the number of local learners $\totalLearners$ and the loss bound $\lossbound{\onlineAlgo}{\totalLearners\totalRounds}$ of the serial online learning algorithm, i.e.,
	\[
	\cummcomm{\protocol}{\totalRounds,\totalLearners}\in \bigO{ \totalLearners \lossbound{\onlineAlgo}{\totalLearners\totalRounds} }\enspace .
	\]
	\label{def:efficiency}
\end{df}
An \defemph{efficient} protocol is adaptive and consistent at the same time. In the following section we theoretically analyze the performance of the dynamic protocol with respect to this efficiency criterion. 

\paragraph{Extension to Kernel Methods:}
The protocols presented above are defined for models from a Euclidean vector space. In this paper, we generalize $\hilbertSpace$ to be a \defemph{reproducing kernel Hilbert space} ${\hilbertSpace=\{\mapping{\model}{\sampleSpace}{\R} | \model(\cdot) = \sum_{j=1}^{\dim{F}}w_j\Phi_j(\cdot)\}}$ with \defemph{kernel function} $\mapping{\kernel}{\sampleSpace\times\sampleSpace}{\R}$, \defemph{feature space} $F$, and a mapping ${\mapping{\Phi}{\sampleSpace}{F}}$ into the feature space~\cite{scholkopf2001learning}. The kernel function corresponds to an inner product of input points mapped into feature space, i.e., $\kernel(x,x')=\sum_{j=1}^{\dim{F}}\xi_j\Phi_j(x)\Phi_j(x')$ for constants $\xi_1,\xi_2,\dots\in\R$. Thus, we can express the model in its \defemph{support vector expansion}, or dual representation, i.e.,
$
\model(\cdot)=\sum_{\sample\in\SVs}\coeffs_\sample\kernel(\sample,\cdot)
$
with a set of \defemph{support vectors} ${\SVs=\{\sample_1,\dots,\sample_{\left|\SVs\right|}\}\subset\sampleSpace}$ and corresponding \defemph{coefficients} $\coeffs_\sample\in\R$ for all $\sample\in\SVs$. This implies that the linear weights $w=(w_1,w_2,\dots)\in F$ defining $\model$ are given implicitly by $w_i=\sum_{\sample\in\SVs}\xi_i \coeffs_\sample\Phi_i(\sample)$. 
In order to apply the previously defined synchronization protocols to models from a reproducing kernel Hilbert space, we determine how to calculate the average of a model configuration and its divergence. For that, let ${\modelconf=\left(\model^1,\dots,\model^\totalLearners\right)}\subset\hilbertSpace$ be a model configuration with corresponding weight vectors $\left(w^1,\dots,w^\totalLearners\right)\subset F$, where each model $\learner\in[\totalLearners]$ has support vectors $\SVs^\learner=\{\sample^\learner_1,\dots,\sample^\learner_{|\SVs^\learner|}\}\subset\sampleSpace$ and coefficients $\coeffs^\learner_\sample$ for all $\sample\in\SVs^\learner$. The average is given by 
\[
\avgmodel(\cdot)=\frac{1}{\totalLearners}\sum_{\learner=1}^{\totalLearners}\model^\learner(\cdot)= \frac{1}{\totalLearners}\sum_{\learner=1}^{\totalLearners}\sum_{j=1}^{\dim{F}}w^\learner_j\Phi_j(\cdot) = \frac{1}{\totalLearners}\sum_{\learner=1}^{\totalLearners}\sum_{j=1}^{\dim{F}}\sum_{\sample\in\SVs^\learner}\xi_j\coeffs^\learner_\sample\Phi_j(\sample)\Phi_j(\cdot)\enspace .
\]
We can simplify the above equation to
$
\avgmodel(\cdot) = \frac{1}{\totalLearners}\sum_{\learner=1}^{\totalLearners}\sum_{\sample\in\SVs^\learner}\coeffs^\learner_\sample\kernel(\sample,\cdot)
$. 
By defining the union of support vectors $\overline{\SVs}=\bigcup_{\learner\in [\totalLearners]}S^\learner=\{s_1,\dots,s_{|\overline{\SVs}|}\}$ and augmented coefficients $\overline{\coeffs}^i_{s}\in\R$, which are given by
\[
\overline{\coeffs}^i_{s} = \begin{cases}
							\coeffs_\sample^\learner, & \text{ if }\sample = s \\
							0, &\text{ otherwise}
						 \end{cases}\enspace ,
\]
the dual representation of the average directly follows.
\begin{pr}
For a model configuration ${\modelconf=\left(\model^1,\dots,\model^\totalLearners\right)}\subset\hilbertSpace$, where each model $\learner\in[\totalLearners]$ has augmented coefficients $\overline{\coeffs}^i_{s}$ for $s\in\overline{\SVs}$, the average $\avgmodel\in\hilbertSpace$ is given by 
\[
\avgmodel(\cdot) = \sum_{s\in\overline{\SVs}} \left(\frac{1}{\totalLearners}\sum_{\learner=1}^{\totalLearners}\overline{\coeffs}^\learner_{s}\right) \kernel(s,\cdot)\enspace ,
\]
with support vectors $\overline{S}$ and coefficients $\overline{\coeffs}_s=\sfrac{1}{\totalLearners}\sum_{\learner=1}^{\totalLearners}\overline{\coeffs}^\learner_{s}$ for all $s\in\overline{\SVs}$.
\end{pr}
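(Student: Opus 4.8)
The plan is to start directly from the simplified expression for the average derived in the preceding paragraph,
\[
\avgmodel(\cdot) = \frac{1}{\totalLearners}\sum_{\learner=1}^{\totalLearners}\sum_{\sample\in\SVs^\learner}\coeffs^\learner_\sample\kernel(\sample,\cdot),
\]
and to rewrite it purely by reindexing the inner sums over the common support set $\overline{\SVs}$. The only ingredient needed is the definition of the augmented coefficients.

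First I would observe that, for each fixed learner $\learner\in[\totalLearners]$, the augmented coefficient $\overline{\coeffs}^\learner_s$ agrees with $\coeffs^\learner_s$ whenever $s\in\SVs^\learner$ and vanishes for every $s\in\overline{\SVs}\setminus\SVs^\learner$. Since the terms indexed by such $s$ contribute zero, extending the inner summation from $\SVs^\learner$ to the full union $\overline{\SVs}$ leaves the value unchanged:
\[
\sum_{\sample\in\SVs^\learner}\coeffs^\learner_\sample\kernel(\sample,\cdot) = \sum_{s\in\overline{\SVs}}\overline{\coeffs}^\learner_s\kernel(s,\cdot).
\]

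Next I would substitute this identity into the expression for $\avgmodel$ and interchange the two finite sums, over $\learner\in[\totalLearners]$ and over $s\in\overline{\SVs}$, which is justified because both index ranges are finite. Collecting the factor $\frac{1}{\totalLearners}$ together with the coefficient terms in front of each $\kernel(s,\cdot)$ then yields
\[
\avgmodel(\cdot) = \sum_{s\in\overline{\SVs}}\left(\frac{1}{\totalLearners}\sum_{\learner=1}^{\totalLearners}\overline{\coeffs}^\learner_s\right)\kernel(s,\cdot),
\]
which is precisely the claimed dual representation; reading off the coefficient of each $\kernel(s,\cdot)$ gives $\overline{\coeffs}_s=\frac{1}{\totalLearners}\sum_{\learner=1}^{\totalLearners}\overline{\coeffs}^\learner_s$.

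I expect no real obstacle here, since the statement is essentially a bookkeeping identity that makes the already-established formula explicit in the dual basis. The only point requiring (minimal) care is the padding step: verifying that augmenting each learner's coefficient vector with zeros on $\overline{\SVs}\setminus\SVs^\learner$ does not alter the function it represents, so that all $\totalLearners$ local models can be written over a single shared support set before the averaging and the summation interchange are performed.
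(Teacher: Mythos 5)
Your proposal is correct and matches the paper's intent exactly: the paper derives the simplified expression $\avgmodel(\cdot)=\frac{1}{\totalLearners}\sum_{\learner=1}^{\totalLearners}\sum_{\sample\in\SVs^\learner}\coeffs^\learner_\sample\kernel(\sample,\cdot)$ and then states that the dual representation ``directly follows'' from the definition of the augmented coefficients, which is precisely the zero-padding, sum-extension, and interchange of finite sums that you spell out. You have simply made explicit the routine bookkeeping the paper leaves implicit.
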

Using this definition of the average, we now define the distance between models in $\hilbertSpace$ and the divergence $\divergence$ of a model configuration $\modelconf\subset\hilbertSpace$. For an individual model ${\model^\learner}$ and the average ${\avgmodel}$, the distance induced by the inner product of $\hilbertSpace$ is defined by ${\norm{\model^\learner-\avgmodel}=\innerprod{\model^{\learner}}{\model^{\learner}}+\innerprod{\avgmodel}{\avgmodel}-2\innerprod{\model^{\learner}}{\avgmodel}}$, i.e., 
\begin{equation*}
\begin{split}
\norm{\model^\learner-\avgmodel}= \sum_{\sample\in\SVs^\learner} \left(\coeffs_\sample^\learner\right)^2\kernel(\sample,\sample) + \sum_{s\in\overline{\SVs}}\left(\overline{\coeffs}_s\right)^2\kernel(s,s)-2\sum_{\sample\in\SVs^\learner}\sum_{s\in\overline{\SVs}}\coeffs_\sample^\learner\overline{\coeffs}_s\kernel(\sample,s)\enspace .
\end{split}
\end{equation*}
Using this distance, we can compute the divergence (Eq.~\ref{eq:divergence}) for models from a reproducing kernel Hilbert space.

\section{Performance Guarantees}
\label{sec:perfGuarantees}
In order to determine the performance of the dynamic protocol, we start by extending the definition of loss-proportional convex update rules. This allows us to bound the loss for kernelized online learning algorithms that reduce their model size using a compression step.

Let $\mapping{\uprule}{\hilbertSpace\times\inputSpace}{\hilbertSpace}$ be a loss-proportional convex update rule, then $\widetilde{\uprule}$ is an \defemph{approximately loss-proportional convex update rule} if for all $\model\in\hilbertSpace$, $\sample\in\sampleSpace$, and $\truelabel\in\outputSpace$ it holds that
$
\|\widetilde{\uprule}(\model,\sample,\truelabel)-\uprule(\model,\sample,\truelabel)\|\leq\compErr
$.
With this, we can bound the distance between two models after the approximate update step.
\begin{lm}
	For two models $f,g\in\hilbertSpace$ and an approximately loss-proportional convex update rule $\widetilde{\uprule}$, with $\|\widetilde{\uprule}(\model,\sample,\truelabel)-\uprule(\model,\sample,\truelabel)\|\leq\compErr$ for the corresponding loss-proportional convex update rule $\uprule$, it holds that
	\begin{equation*}
	\begin{split}
	\|\widetilde{\uprule}(f,x,y) - \widetilde{\uprule}(g,x,y)\|^2 \leq\|f-g\|^2 -\gamma^2\left(\loss(f,x,y) - \loss(g,x,y)\right)^2 + 2\compErr^2\enspace .
	\end{split}
	\end{equation*}
	\label{lm:updateLemmaWithCompression}
\end{lm}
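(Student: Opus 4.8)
The plan is to reduce the statement to the contraction property of the underlying \emph{exact} loss-proportional convex update and then to absorb the approximation error geometrically. The starting point is the defining inequality of a loss-proportional convex update rule, which is precisely the assertion of the lemma in the degenerate case $\compErr=0$: for all $f,g\in\hilbertSpace$,
\[
\norm{\uprule(f,x,y)-\uprule(g,x,y)}^2\le\norm{f-g}^2-\gamma^2\left(\loss(f,x,y)-\loss(g,x,y)\right)^2 .
\]
This is available from the linear theory underlying the protocol, so I would treat it as given. The entire content of the lemma is then to show that passing from $\uprule$ to its $\compErr$-approximation $\widetilde{\uprule}$ on both arguments inflates the left-hand side by no more than the additive constant $2\compErr^2$.

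To make this precise I would write each approximate update as the exact update plus a bounded perturbation, $\widetilde{\uprule}(f,x,y)=\uprule(f,x,y)+u$ and $\widetilde{\uprule}(g,x,y)=\uprule(g,x,y)+v$, where the hypothesis $\norm{\widetilde{\uprule}(\cdot)-\uprule(\cdot)}\le\compErr$ gives $\norm{u}\le\compErr$ and $\norm{v}\le\compErr$. Writing $d=\uprule(f,x,y)-\uprule(g,x,y)$ and expanding via the inner product of $\hilbertSpace$ yields
\[
\norm{\widetilde{\uprule}(f,x,y)-\widetilde{\uprule}(g,x,y)}^2=\norm{d}^2+2\innerprod{d}{u-v}+\norm{u-v}^2 .
\]
Substituting the contraction inequality for $\norm{d}^2$ already reproduces the $\norm{f-g}^2-\gamma^2(\loss(f,x,y)-\loss(g,x,y))^2$ part of the claim, so it remains only to argue that the two remaining terms contribute at most $2\compErr^2$.

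The term I expect to be the genuine obstacle is the cross term $2\innerprod{d}{u-v}$: it is sign-indefinite and, estimated bluntly by Cauchy--Schwarz, would contribute something of order $\compErr\norm{d}$, which does not fit inside a clean $+2\compErr^2$ budget. The route I would take is to neutralise it rather than bound it crudely — either by arguing that under the intended compression step the perturbation $u-v$ is not adversarially aligned with the update gap $d$ (so that $\innerprod{d}{u-v}\le0$), or by a Young-type split $2\innerprod{d}{u-v}\le\lambda\norm{d}^2+\lambda^{-1}\norm{u-v}^2$ whose quadratic overhead is reabsorbed into the strictly negative contraction margin $-\gamma^2(\loss(f,x,y)-\loss(g,x,y))^2$. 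Once the cross term is removed, the pure error term is controlled by $\norm{u-v}^2\le\norm{u}^2+\norm{v}^2\le2\compErr^2$ when the two perturbations are not mutually anti-aligned (in full generality only $\norm{u-v}^2\le4\compErr^2$ is automatic), which is exactly where the claimed constant originates. The delicate bookkeeping is therefore to confirm that the final slack collapses to $2\compErr^2$ rather than to $4\compErr^2$ or to a mixed term scaling with $\norm{f-g}$; assembling these pieces with the contraction inequality then gives the stated bound.
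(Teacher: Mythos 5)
You have chosen the right decomposition and, more importantly, you have put your finger on exactly the step that carries all of the difficulty: after writing $\widetilde{\uprule}(f)=\uprule(f)+u$ and $\widetilde{\uprule}(g)=\uprule(g)+v$ with $\norm{u},\norm{v}\leq\compErr$, the cross term $2\innerprod{d}{u-v}$ with $d=\uprule(f)-\uprule(g)$ is what stands between the contraction inequality and the claimed $+2\compErr^2$. But your proposal does not close this gap, and neither of the escape routes you sketch works under the lemma's hypotheses. The assumption that $u-v$ is not adversarially aligned with $d$ is an extra structural property of the compression step that appears nowhere in the statement, so you cannot invoke it. The Young-type split $2\innerprod{d}{u-v}\leq\lambda\norm{d}^2+\lambda^{-1}\norm{u-v}^2$ fares no better: the contraction margin $-\gamma^2\left(\loss(f,x,y)-\loss(g,x,y)\right)^2$ can be exactly zero while $\norm{d}$ is as large as $\norm{f-g}$ (two distant models suffering equal loss), so there is nothing to absorb $\lambda\norm{d}^2$ into, and keeping that term yields a bound of the form $(1+\lambda)\norm{f-g}^2+\dots$, which is not the stated inequality. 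Finally, as you concede yourself, the pure error term only satisfies $\norm{u-v}^2\leq 4\compErr^2$ in general (attained when $u=-v$ with $\norm{u}=\compErr$), not $2\compErr^2$. So what you have written is an honest reduction to the hard step, not a proof of it.

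For what it is worth, the paper's own proof is a two-line argument that simply asserts the intermediate inequality $\norm{\widetilde{\uprule}(f)-\widetilde{\uprule}(g)}^2\leq\norm{\uprule(f)-\uprule(g)}^2+2\compErr^2$ as an immediate consequence of $\norm{\widetilde{\uprule}(\cdot)-\uprule(\cdot)}\leq\compErr$, and then combines it with the contraction lemma from the linear case. That assertion is exactly the point you flagged, and it is false as stated: with $\uprule(f)=\uprule(g)$ and anti-aligned perturbations the left-hand side reaches $4\compErr^2$. So you have not missed a trick the authors use; you have located a step the paper itself does not justify. A correct version would need either an additional hypothesis on the perturbations (for instance, that the compression is a projection whose residual has a favourable sign against $d$, or that only one trajectory is compressed) or a weaker conclusion that admits a term of order $\compErr\norm{f-g}$; as written, neither your argument nor the paper's establishes the bound with the additive constant $2\compErr^2$.
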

\begin{proof}
	We abbreviate $\uprule(f,\sample,\truelabel)$ as $\uprule(f)$. Then 
	$\|\widetilde{\uprule}(f)-\uprule(f)\|\leq\compErr$ implies for $f,g\in\hilbertSpace$ that
	${\|\widetilde{\uprule}(f)-\widetilde{\uprule}(g)\|^2\leq\|\uprule(f)-\uprule(g)\|^2+2\compErr^2}$.
	Together with the result from Lm.~4 in~\cite{kamp2014communication}, i.e.,
	$
	{\|\uprule(f) - \uprule(g)\|^2\leq\|f-g\|^2 -\gamma^2\left(\loss(f) - \loss(g)\right)^2}
	$,
	follows the result.
	\qed
\end{proof}
Using Lm.~\ref{lm:updateLemmaWithCompression}, we can bound the loss of our protocol.
\begin{thm}
	Let $\onlineAlgo$ be an online learning algorithm with $\gamma$-loss-proportional convex update rule $\uprule$. Let $\mathbf{d}_1,\dots\mathbf{d}_\totalRounds$ and $\mathbf{p}_1,\dots,\mathbf{p}_\totalRounds$ be two sequences of model configurations such that $\mathbf{d}_1=\mathbf{p}_1$ and the first sequence is maintained by the dynamic protocol $\dynProt=(\onlineAlgo,\syncop_\divThreshold)$ and the second by the periodic protocol $\periodProt=(\onlineAlgo,\syncop_b)$. That is, for $\round=1,\dots,\totalRounds$ the sequence is defined by $\mathbf{d}_{\round+1}=\syncop_\divThreshold\left(\uprule(\mathbf{d}_{\round})\right)$, and $\mathbf{p}_{\round+1}=\syncop_b\left(\uprule(\mathbf{p}_{\round})\right)$ respectively. Then it holds that
	\[
	\cummloss{\dynProt}{\totalRounds,\totalLearners}\leq \cummloss{\periodProt}{\totalRounds,\totalLearners} + \frac{\totalRounds}{\gamma^2}(\divThreshold + 2\compErr^2)\enspace .
	\]
	\label{thm:lossboundModelComp}
\end{thm}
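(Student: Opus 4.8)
The plan is to run the two protocols as coupled trajectories driven by the same input streams and to control their discrepancy through a single non-negative potential, the aggregate squared distance between corresponding local models, $\Phi_\round=\sum_{\learner=1}^{\totalLearners}\|\mathbf{d}_\round^\learner-\mathbf{p}_\round^\learner\|^2$. Because $\mathbf{d}_1=\mathbf{p}_1$ we start from $\Phi_1=0$, and since $\Phi_{\totalRounds+1}\geq 0$ it is the telescoped per-round increments of $\Phi$ that I would convert into the cumulative loss gap. Each round is split into an update step and a synchronisation step, analysed separately.

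For the update step I would apply Lemma~\ref{lm:updateLemmaWithCompression} to every learner $\learner$ with $f=\mathbf{d}_\round^\learner$, $g=\mathbf{p}_\round^\learner$ and the shared input $\inputval{\round}{\learner}$, and sum over $\learner$. This gives $\sum_\learner\|\widetilde{\uprule}(\mathbf{d}_\round^\learner)-\widetilde{\uprule}(\mathbf{p}_\round^\learner)\|^2\leq\Phi_\round-\gamma^2\sum_\learner(\loss(\mathbf{d}_\round^\learner)-\loss(\mathbf{p}_\round^\learner))^2+2\totalLearners\compErr^2$, so the round-$\round$ loss gap is paid for by the drop in potential together with the additive slack $2\totalLearners\compErr^2$; this slack is exactly the origin of the $2\compErr^2$ summand in the claim. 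This is the one place where Lemma~\ref{lm:updateLemmaWithCompression} takes over the role played by Lemma~4 of~\cite{kamp2014communication}, and the genuinely new work is only to check that this extra slack survives the telescoping unchanged.

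For the synchronisation step I would use that both $\syncop_\divThreshold$ and $\syncop_b$ act as the orthogonal projection onto the consensus subspace $\{(v,\dots,v)\}$ when they fire and as the identity otherwise. Writing each configuration as its consensus part (the repeated average) plus its residual (the deviation from that average), both operators leave the consensus part untouched, since they preserve the per-configuration average, and merely zero out the residual. The quantitative handle is that $\syncop_\divThreshold$ enforces the invariant $\divergence(\mathbf{d}_\round)\leq\divThreshold$ after every round, so the residual of the dynamic configuration has squared norm at most $\totalLearners\divThreshold$; this is precisely where the $\divThreshold$ term is born. Substituting the projection/identity case distinction back into the potential recursion and telescoping over $\round=1,\dots,\totalRounds$ leaves $\frac{\totalRounds}{\gamma^2}(\divThreshold+2\compErr^2)$ as the total penalty on top of $\cummloss{\periodProt}{\totalRounds,\totalLearners}$.

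I expect the decisive obstacle to be the bookkeeping of the rounds in which the two protocols do not fire together: when one operator projects while the other stays put, the residual cross term can a priori raise $\Phi$, and it must be charged against the divergence budget $\divThreshold$ of the dynamic protocol rather than against the uncontrolled residual of the periodic one. Carrying this out rigorously, while simultaneously passing from the per-round squared loss differences produced by Lemma~\ref{lm:updateLemmaWithCompression} to the first-power cumulative loss of the statement (for which convexity of $\loss$, hence the fact that averaging cannot increase the round loss, is the standard lever), is the delicate part. The safest route is therefore to replay the corresponding theorem of~\cite{kamp2014communication} line by line and merely carry the $2\compErr^2$ term through, since that argument already settles the synchronisation bookkeeping in the case $\compErr=0$.
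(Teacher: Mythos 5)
Your proposal follows essentially the same route as the paper's proof: the potential $\sum_{\learner=1}^{\totalLearners}\|\mathbf{d}_\round^\learner-\mathbf{p}_\round^\learner\|^2$, Lemma~\ref{lm:updateLemmaWithCompression} for the update step, the synchronization bound of Lemma~3 of~\cite{kamp2014communication} for the projection/identity case distinction (which is exactly the ``replay line by line'' fallback you end on), and telescoping from $\mathbf{d}_1=\mathbf{p}_1$. The one step you rightly flag as delicate---converting the telescoped sum of \emph{squared} per-round loss differences into the first-power cumulative loss gap---is also the step the paper itself passes over most loosely, so your proposal is faithful to the published argument, including its weakest link.
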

\begin{proof}
	First note that for simplicity we abbreviate $\loss(\model_\round,\sample_\round,\truelabel_\round)$ by $\loss(\model_\round)$.
	We combine our Lm.~\ref{lm:updateLemmaWithCompression} with Lm.~3 from~\cite{kamp2014communication} which states that
	\[
	\frac{1}{\totalLearners}\sum_{\learner=1}^{\totalLearners}\| \syncop_\divThreshold(\mathbf{d})^\learner - \syncop_b(\mathbf{p})^\learner\|^2 \leq \frac{1}{\totalLearners}\sum_{\learner=1}^{\totalLearners}\| d^\learner - p^\learner\|^2 + \divThreshold\enspace .
	\]
	This yields for all $\round\in [\totalRounds]$ that
	\[
	\sum_{\learner=1}^\totalLearners\norm{d_{\round+1}^{\learner}-p_{\round+1}^{\learner}}^2\leq \sum_{\learner=1}^\totalLearners\norm{d_{\round}^{\learner}-p_{\round}^{\learner}}^2 - \gamma^2\sum_{\learner=1}^\totalLearners\left(\loss(d_{\round}^{\learner})-\loss(p_{\round}^{\learner})\right)^2 + \divThreshold + 2\compErr^2\enspace .
	\]	
	By applying this inequality recursively for $\round=1,\dots,\totalRounds$ it follows that
	\begin{equation*}
	\begin{split}
	\sum_{\learner=1}^\totalLearners\norm{d_{\round+1}^{\learner}-p_{\round+1}^{\learner}}^2\leq& \sum_{\learner=1}^\totalLearners\norm{d_{1}^{\learner}-p_{1}^{\learner}}^2 + \totalRounds(\divThreshold+2\compErr^2)-\gamma^2\sum_{\round=1}^{\totalRounds}\sum_{\learner=1}^\totalLearners\left(\loss(d_{\round}^{\learner})-\loss(p_{\round}^{\learner})\right)^2 .
	\end{split}
	\end{equation*}
	Using $\mathbf{d}_1=\mathbf{p}_1$, we conclude that 
	\begin{equation*}
	\begin{split}
	\sum_{\round=1}^{\totalRounds}\sum_{\learner=1}^\totalLearners\left(\loss(d_{\round}^{\learner})-\loss(p_{\round}^{\learner})\right)^2\leq& \frac{1}{\gamma^2}\left(\totalRounds(\divThreshold+2\compErr^2) - \sum_{\learner=1}^\totalLearners\norm{d_{\round+1}^{\learner}-p_{\round+1}^{\learner}}^2\right)\leq \frac{1}{\gamma^2}\totalRounds\divThreshold\\
	\Leftrightarrow \cummloss{\dynProt}{\totalRounds}^{\totalLearners} - \cummloss{\periodProt}{\totalRounds}^{\totalLearners} \leq& \frac{1}{\gamma^2}\totalRounds(\divThreshold+2\compErr^2)
	\end{split}
	\end{equation*}
	\qed
\end{proof}
By setting the communication period $b=1$, this result also holds for the continuous protocol $\contProt$.

The result of Thm.~\ref{thm:lossboundModelComp} is similar to the original loss bound of the dynamic protocol but also accounts for the inaccuracy of the update rule, e.g., because of model compression. We can apply the original consistency result: if the continuous protocol is consistent, then the dynamic protocol is consistent as well. For Stochastic Gradient Descent it has been shown that the dynamic protocol is consistent for linear models~\cite{kamp2014communication}. From Thm.~\ref{thm:lossboundModelComp} follows that the dynamic protocol remains consistent for approximately loss-proportional update rules. Note that for static target distributions, consistency can be achieved by a decreasing divergence threshold and compression error, i.e., $\divThreshold_\round=\round^{\sfrac{-1}{2}}$ and $\compErr = \round^{\sfrac{-1}{4}}$.

We now provide communication bounds for the dynamic protocol.
For that, assume that the $\totalLearners$ learners maintain models in their support vector expansion. Let $\SVs_{\round}^{\learner}\subset \R^d$ denote the set of support vectors of learner $\learner\in [\totalLearners]$ at time $\round$ and $\coeffs^\learner_\round$ the corresponding coefficients. Let $B_x\in\bigO{d}$ be the number of bytes required to transmit one support vector and $B_{\coeffs}\in\bigO{1}$  be the number of bytes required for the corresponding weight. Furthermore, let $I:\N\times [\totalLearners]\rightarrow \{0,1\}$ be an indicator function that is $1$ if for learner $\learner$ at time $\round$ a new support vector has been added during the update.

We assume that a designated coordinator node performs the synchronizations, i.e., all local learners transmit their models to the coordinator which in turn sends the synchronized model back to each learner. Furthermore, we assume that all protocols apply the following trivial communication reduction strategy. 
Let $\round'$ be the time of last synchronization. Assume the coordinator stored the support vectors of the last average model $\overline{\SVs}_{\round'}$. 
Whenever a learner $\learner$ has to send its model to the coordinator, it sends all support vector coefficients $\coeffs$ but only the new support vectors, i.e., only $\SVs_{\round}^{\learner}\setminus \SVs_{\round'}^{\learner}$. This avoids redundant communication at the cost of higher memory usage at the coordinator side. In turn, after averaging the models, the coordinator sends to learner $\learner$ all support vector coefficients, but only the support vectors $\overline{\SVs}_{\round}\setminus \SVs_{\round}^{\learner}$. 

We start by bounding the communication of a continuous protocol $\contProt$, i.e., one that transmits all models from each learner in each round. The trivial communication reduction technique discussed above implies that in each round, a learner transmits its full set of support vector coefficients and potentially one support vector---depending on whether a new support vector was added in this round. Thus, at time $\round$ learner $\learner$ submits 
\begin{equation}
|\SVs_{\round}^{\learner}|B_\coeffs + I(\round,\learner)B_x
\label{eq:bytesPerLearnerPerSynchNode}
\end{equation}
bytes to the coordinator. The coordinator transmits to learner $\learner\in\totalLearners$ all support vector coefficients of the average model and all its support vectors, except the support vectors $\SVs_\round^\learner$ of the local model at learner $\learner$. Thus, it transmits the following amount of bytes.
\begin{equation}
\left|\overline{\SVs}_t\right|B_\coeffs + \left|\overline{\SVs}_t\setminus\SVs^\learner_\round\right| B_\sample = \left|\bigcup_{j=1}^\totalLearners \SVs_{\round}^{j}\right|B_\coeffs + \left|\bigcup_{j=1}^\totalLearners \SVs_{\round}^{j}\setminus \SVs_{\round}^{\learner}\right|B_\sample\enspace .
\label{eq:bytesPerLearnerPerSynchCoord}
\end{equation}
With this we can derive the following communication bound.
\begin{pr}
	The communication of the continuous protocol $\contProt$ on $\totalLearners\in\N$ learners until time $\totalRounds\in\N$ is bound by
	\begin{equation*}
	\begin{split}
	\cummcomm{\contProt}{\totalRounds,\totalLearners} \leq \totalRounds \totalLearners 2|\overline{\SVs}_{\totalRounds}|B_\coeffs + \totalLearners|\overline{\SVs}_{\totalRounds}|B_x\leq \totalLearners^2\totalRounds^2 B_\coeffs + \totalLearners^2\totalRounds B_x\in\bigO{\totalLearners^2\totalRounds^2}\enspace .
	\end{split}
	\end{equation*}
\end{pr}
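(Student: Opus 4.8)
The plan is to expand the per-round cost $\commcost{}(\modelconf_\round)$ into the learner-to-coordinator contribution of Eq.~(\ref{eq:bytesPerLearnerPerSynchNode}) and the coordinator-to-learner contribution of Eq.~(\ref{eq:bytesPerLearnerPerSynchCoord}), summed over the $\totalLearners$ learners, and then to separate every term into \emph{coefficient bytes} (those carrying the factor $B_\coeffs$) and \emph{support-vector bytes} (those carrying the factor $B_x$). These two groups must be bounded by genuinely different arguments, and keeping them apart is exactly what yields the asymmetric bound, with a factor $\totalRounds$ on the coefficient part but none on the support-vector part.

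The coefficient bytes are routine. In round $\round$ each learner ships its $|\SVs_\round^\learner|$ coefficients and receives $|\overline{\SVs}_\round|$ coefficients back; since $\SVs_\round^\learner\subseteq\overline{\SVs}_\round$, the per-learner coefficient cost is at most $2|\overline{\SVs}_\round|B_\coeffs$. Because $\contProt$ never discards support vectors, the union grows monotonically, so $|\overline{\SVs}_\round|\leq|\overline{\SVs}_\totalRounds|$ for all $\round\leq\totalRounds$; summing over the $\totalLearners$ learners and $\totalRounds$ rounds gives the first summand $2\totalRounds\totalLearners|\overline{\SVs}_\totalRounds|B_\coeffs$.

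The support-vector bytes are the crux. The decisive observation is that $\contProt$ resynchronizes every round, so immediately after the synchronization at round $\round-1$ every learner holds exactly $\overline{\SVs}_{\round-1}$, whence $\SVs_\round^\learner\supseteq\overline{\SVs}_{\round-1}$. Under the reduction strategy the only support vectors actually moved in round $\round$ are the at most one per learner created by the update, whose count is $\sum_{j=1}^\totalLearners I(\round,j)$: each such vector travels once up to the coordinator and is redistributed to the other $\totalLearners-1$ learners. Formally, $\overline{\SVs}_\round\setminus\SVs_\round^\learner\subseteq\overline{\SVs}_\round\setminus\overline{\SVs}_{\round-1}$ contains only freshly created vectors of learners $j\neq\learner$, so
\[
\sum_{\learner=1}^\totalLearners\left(I(\round,\learner)+\left|\overline{\SVs}_\round\setminus\SVs_\round^\learner\right|\right)\leq\totalLearners\sum_{j=1}^\totalLearners I(\round,j)\enspace .
\]
Summing over $\round$ and charging each creation once, the running total of additions telescopes; since the support vectors are distinct input points (one new point per learner per round) that are never removed, $\sum_{\round=1}^\totalRounds\sum_{j=1}^\totalLearners I(\round,j)=|\overline{\SVs}_\totalRounds|-|\overline{\SVs}_0|\leq|\overline{\SVs}_\totalRounds|$, which bounds the total support-vector bytes by $\totalLearners|\overline{\SVs}_\totalRounds|B_x$, with no dependence on $\totalRounds$. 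This charging argument is the main obstacle: I must verify that per-round resynchronization together with the reduction strategy really collapses the moved support vectors to the freshly created ones, and that distinctness of the input points makes the telescoping an identity rather than merely an inequality in the wrong direction.

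Adding the two contributions gives the first claimed inequality $\cummcomm{\contProt}{\totalRounds,\totalLearners}\leq 2\totalRounds\totalLearners|\overline{\SVs}_\totalRounds|B_\coeffs+\totalLearners|\overline{\SVs}_\totalRounds|B_x$. Since each learner adds at most one support vector per round, $|\overline{\SVs}_\totalRounds|\leq\totalLearners\totalRounds$; substituting and absorbing the constant into the leading term yields $\totalLearners^2\totalRounds^2B_\coeffs+\totalLearners^2\totalRounds B_x$. Finally, using $B_\coeffs\in\bigO1$ and $B_x\in\bigO d$, both constant in $\totalLearners$ and $\totalRounds$, the quadratic term dominates and we conclude $\cummcomm{\contProt}{\totalRounds,\totalLearners}\in\bigO{\totalLearners^2\totalRounds^2}$.
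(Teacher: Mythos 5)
Your proposal is correct and follows essentially the same route as the paper's own proof: the identical split into coefficient bytes (bounded per round by $2|\overline{\SVs}_\totalRounds|B_\coeffs$ per learner, giving the $\totalRounds$-dependent term) and support-vector bytes (using that continuous synchronization forces $\SVs_\round^\learner\supseteq\overline{\SVs}_{\round-1}$, so only freshly created vectors move, each at most $\totalLearners$ times, giving $\totalLearners|\overline{\SVs}_\totalRounds|B_x$), followed by $|\overline{\SVs}_\totalRounds|\leq\totalLearners\totalRounds$. The only blemish — the factor $2$ in $2\totalRounds\totalLearners|\overline{\SVs}_\totalRounds|B_\coeffs$ not literally fitting under $\totalLearners^2\totalRounds^2B_\coeffs$ — is inherited from the paper's own statement and is asymptotically irrelevant.
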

\begin{proof}
The constantly synchronizing protocol transmits at each time step from each learner a set of support vector coefficients and potentially one support vector to the coordinator. The amount of bytes is given in Eq.~\ref{eq:bytesPerLearnerPerSynchNode}. The coordinator transmits the averaged model back to each learner with an amount of bytes as given in Eq.~\ref{eq:bytesPerLearnerPerSynchCoord}.
Summing up the communication over $\totalRounds\in\N$ time points and $\totalLearners$ learners yields
\begin{equation*}
\begin{split}
\cummcomm{\contProt}{\totalRounds,\totalLearners} &= \sum_{\round=1}^\totalRounds \sum_{\learner=1}^\totalLearners\left(|\SVs_{\round}^{\learner}|B_\coeffs + I(\round,\learner)B_\sample + \left|\bigcup_{j=1}^\totalLearners \SVs_{\round}^{j}\right|B_\coeffs + \left|\bigcup_{j=1}^\totalLearners \SVs_{\round}^{j}\setminus \SVs_{\round}^{\learner}\right|B_\sample\right)\\
&= \sum_{\round=1}^\totalRounds \sum_{\learner=1}^\totalLearners\left(|\SVs_{\round}^{\learner}|B_\coeffs + \left|\overline{\SVs}_{\round}\right|B_\coeffs + I(\round,\learner)B_\sample + \left|\overline{\SVs}_{\round}\setminus \SVs_{\round}^{\learner}\right|B_\sample\right)\enspace .
\end{split}
\end{equation*}
We analyze this sum separately in terms of bytes required for sending the support vectors and bytes for sending the coefficients. The amount of bytes for sending the support vectors is bounded by $ \totalLearners|\SVs_{\totalRounds}^{\learner}|B_\sample$, as we show in the following.
\begin{equation*}
\begin{split}
&\sum_{\round=1}^\totalRounds\sum_{\learner=1}^\totalLearners I(\round,\learner)B_\sample + \left|\overline{\SVs}_\round\setminus \SVs_{\round}^{\learner}\right|B_\sample = \underbrace{\sum_{\round=1}^\totalRounds\sum_{\learner=1}^\totalLearners I(\round,\learner)B_\sample }_{=|\overline{\SVs}_{\totalRounds}|B_\sample} + \sum_{\round=1}^\totalRounds\sum_{\learner=1}^\totalLearners
 \left|\overline{\SVs}_\round\setminus \SVs_{\round}^{\learner}\right|B_\sample\\
=&|\overline{\SVs}_{\totalRounds}|B_\sample+\sum_{\round=1}^\totalRounds\sum_{\learner=1}^\totalLearners \left| \left(\bigcup_{j=1}^\totalLearners \SVs_{\round}^{j}\setminus \bigcup_{j=1}^\totalLearners \SVs_{\round-1}^{j}\right)\setminus \left(\SVs_{\round}^{\learner}\setminus\overline{\SVs}_{\round-1}\right) \right|B_\sample\\
\leq& |\overline{\SVs}_{\totalRounds}| B_\sample+ \sum_{\round=1}^\totalRounds\sum_{\learner=1}^\totalLearners \sum_{\stackrel{j=1}{j\neq \learner}}^\totalLearners I(\round,\learner)B_\sample \leq |\overline{\SVs}_{\totalRounds}|B_\sample+\sum_{\round=1}^\totalRounds\sum_{\learner=1}^\totalLearners (\totalLearners-1)I(\round,\learner)B_\sample\\
\leq&|\overline{\SVs}_{\totalRounds}|B_\sample+ (\totalLearners-1)|\overline{\SVs}_{\totalRounds}|B_\sample = \totalLearners|\overline{\SVs}_{\totalRounds}|B_\sample\enspace .
\end{split}
\end{equation*}
We now bound the amount of bytes required for sending the support vector coefficients.
\begin{equation*}
\begin{split}
\sum_{\round=1}^\totalRounds\sum_{\learner=1}^\totalLearners \underbrace{|\SVs_{\round}^{\learner}|}_{\leq |\overline{\SVs}_{\totalRounds}|}B_\coeffs + \underbrace{|\overline{\SVs}_{\round}|}_{\leq |\overline{\SVs}_{\totalRounds}|}B_\coeffs  \leq \sum_{\round=1}^\totalRounds\sum_{\learner=1}^\totalLearners 2|\overline{\SVs}_{\totalRounds}|B_\coeffs =\totalRounds\totalLearners 2|\overline{\SVs}_{\totalRounds}|B_\coeffs\enspace .
\end{split}
\end{equation*}
From $\left|\overline{\SVs}_\totalRounds\right|\leq\totalLearners\totalRounds$ and the fact that we regard $B_\coeffs\in\bigO{1}$ and $B_\sample\in\bigO{d}$ as constants we can follow that 
\begin{equation*}
\begin{split}
\cummcomm{\contProt}{\totalRounds,\totalLearners} \leq 2\totalRounds\totalLearners|\overline{\SVs}_{\totalRounds}|B_\coeffs + \totalLearners|\overline{\SVs}_{\totalRounds}|B_\sample
\leq \totalLearners^2\totalRounds^2 B_\coeffs + \totalLearners^2\totalRounds B_\sample\in\bigO{\totalLearners^2\totalRounds^2}\enspace .
\end{split}
\end{equation*}
\qed
\end{proof}
Note that this communication bound implies that---unlike for linear models---synchronizing models in their support vector expansion requires even more communication than centralizing the input data. However, in real-time prediction applications, the latency induced by central computation can exceed the time constraints, rendering continuous synchronization a viable approach nonetheless.

Similarly, the communication of a periodic protocol $\periodProt$ that communicates every $b\in\N$ steps ($b$ is often referred to as mini-batch size) can be bounded by 
\[
\cummcomm{\periodProt}{\totalRounds,\totalLearners} \leq \frac{\totalRounds}{b}2\totalLearners |\overline{\SVs}_{\totalRounds}|B_\coeffs + \totalLearners|\overline{\SVs}_{\totalRounds}|B_x\leq \frac{\totalRounds}{b}\totalLearners^2 \totalRounds B_\coeffs + \totalLearners^2\totalRounds B_x\in\bigO{\frac{1}{b}\totalLearners^2\totalRounds^2}\enspace .
\]

We now for the first time provide a communication bound for the dynamic protocol $\dynProt$. 
For that, we first bound the number of synchronization steps and then analyze the amount of communication per synchronization. 
\begin{pr}
Let $\onlineAlgo=(\hilbertSpace,\widetilde{\uprule}, \loss)$ be an online learning algorithm with an approximately loss-proportional convex update rule $\widetilde{\uprule}$ for which holds that\break ${\|\model-\widetilde{\uprule}(\model,\sample,\truelabel)\|\leq\eta\loss(\model,\sample,\truelabel)}$.
The number of synchronizations $V_\dynProt(\totalRounds)$ of the dynamic protocol $\dynProt$ running $\onlineAlgo$ in parallel on $\totalLearners$ nodes until time $\totalRounds\in\N$ with divergence threshold $\divThreshold$ is bounded by
\[
V_\dynProt(\totalRounds) \leq \frac{\eta}{\sqrt{\divThreshold}}\cummloss{\dynProt}{\totalRounds,\totalLearners}\enspace .
\]
where $\cummloss{\dynProt}{\totalRounds,\totalLearners}$ denotes the cumulative loss of $\dynProt$.
\label{prop:numberViolations}
\end{pr}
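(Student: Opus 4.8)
The plan is to use a charging argument: each synchronization is paid for by a sufficient amount of loss accumulated since the previous synchronization, and the disjointness of the inter-synchronization intervals then bounds the total count of synchronizations by the total loss. The intuition is that immediately after $\syncop_\divThreshold$ fires, all local models coincide with the common average $\avgmodel$, so the divergence is zero; a new synchronization can only be triggered once the models have drifted apart far enough for $\divergence$ to again exceed $\divThreshold$, and by approximate loss-proportionality this drift is fuelled by loss.

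First I would fix the synchronization times $\round_1<\dots<\round_V$ with $V=V_\dynProt(\totalRounds)$ and, for each, isolate the interval since the previous synchronization. Writing $\refModel$ for the common average produced by that previous synchronization, every local model starts the interval at $\refModel$. At the synchronization time the dynamic operator fires because $\divergence(\modelconf)>\divThreshold$; since the mean minimizes the sum of squared deviations, $\divergence(\modelconf)=\frac{1}{\totalLearners}\sum_\learner\norm{\model^\learner-\avgmodel}^2\leq\frac{1}{\totalLearners}\sum_\learner\norm{\model^\learner-\refModel}^2$, so the average squared distance to $\refModel$ also exceeds $\divThreshold$, whence at least one learner $\learner$ satisfies $\norm{\model^\learner-\refModel}>\sqrt{\divThreshold}$ (this is exactly a violated local condition).

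Next I would bound this drift by loss. Since no synchronization occurred inside the interval, that learner's model evolved purely by the update rule, so by the triangle inequality together with the hypothesis $\norm{\model-\widetilde{\uprule}(\model,\sample,\truelabel)}\leq\eta\loss(\model,\sample,\truelabel)$ the drift is at most $\eta$ times the learner's loss accumulated over the interval. Combining with the previous step, the loss that learner incurs in this interval strictly exceeds $\sqrt{\divThreshold}/\eta$. Summing this lower bound over all $V$ synchronizations and using that the intervals are disjoint yields $V\sqrt{\divThreshold}/\eta\leq\cummloss{\dynProt}{\totalRounds,\totalLearners}$, which rearranges to the claim.

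The main obstacle, and the step deserving the most care, is the accounting in this final summation: each interval may be charged to a different learner, so I must argue that summing a single per-interval learner's loss across the disjoint intervals is dominated by $\cummloss{\dynProt}{\totalRounds,\totalLearners}$, which sums over all learners and all rounds. Disjointness guarantees that no round is charged twice, and since the cumulative loss includes every learner at every round, the per-interval single-learner contributions are safely bounded above. A secondary point to handle cleanly is the update-then-synchronize bookkeeping in $\mathbf{d}_{\round+1}=\syncop_\divThreshold(\widetilde{\uprule}(\mathbf{d}_\round))$, ensuring that the drift triggering a synchronization is measured on the post-update configuration whose per-step increments are precisely the loss-proportional updates counted above.
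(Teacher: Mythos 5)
Your proposal is correct and follows essentially the same charging argument as the paper: each synchronization requires some learner to drift at least $\sqrt{\divThreshold}$ from the reference model, and by $\|\model-\widetilde{\uprule}(\model,\sample,\truelabel)\|\leq\eta\loss(\model,\sample,\truelabel)$ this drift is paid for by loss, so disjointness of the inter-synchronization intervals yields the bound. The only cosmetic difference is that you pass from the global divergence to a single drifting learner via the mean-minimization property, whereas the paper charges violations of the local conditions $\|\model_\round^\learner-\refModel_\round\|^2\leq\divThreshold$ per learner and sums over learners; both routes give the identical bound.
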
 
\begin{proof}
For this proof, we abbreviate $\loss(\model_{\round}^{\learner},\sample_\round^\learner,\truelabel_\round^\learner)$ as $\loss(\model_{\round}^{\learner})$ and $\widetilde{\uprule}(\model_{\round}^{\learner},\sample_\round^\learner,\truelabel_\round^\learner)$ as $\widetilde{\uprule}(\model_{\round}^{\learner})$. The dynamic protocol synchronizes if a local condition $\|\model_{\round}^{\learner}-\refModel_\round\|^2\leq\divThreshold$ is violated. Now assume that at $\round=1$ all models are initialized with $\model_{1}^{1}=\dots=\model_{1}^{\totalLearners}$ and $\refModel_1=\avgmodel_1$, i.e., for all local learners $\learner$ it holds that $\|\model_{1}^{\learner}-\refModel_1\|=0$. A violation, i.e., $\|\model_{\round}^{\learner}-\refModel_\round\|>\sqrt{\divThreshold}$, occurs if one local model drifts away from $\refModel_\round$ by more than $\sqrt{\divThreshold}$. After a violation, a synchronization is performed and $\refModel_\round=\avgmodel_\round$, hence $\|\model_{\round}^{\learner}-\refModel_\round\|=0$ and the situation is again similar to the initial setup for $\round=1$. In the worst case, a local learner drifts continuously in one direction until a violation occurs. Hence, we can bound the number of violations $V_\learner(\totalRounds)$ at a single learner $\learner$ by the sum of its drifts divided by $\sqrt{\divThreshold}$:
\begin{equation*}
\begin{split}
V_l(\totalRounds)\leq \frac{1}{\sqrt{\divThreshold}}\sum_{\round=1}^\totalRounds\|\model_{\round}^{\learner}-\model_{\round+1}^{\learner}\|=&\frac{1}{\sqrt{\divThreshold}}\sum_{\round=1}^\totalRounds\underbrace{\|\model_{\round}^{\learner}-\widetilde{\uprule}(\model_{\round}^{\learner})\|}_{\leq\eta\loss(\model_{\round}^{\learner})}
\leq\frac{1}{\sqrt{\divThreshold}}\sum_{\round=1}^\totalRounds \eta\loss(\model_{\round}^{\learner})\enspace .
\end{split}
\end{equation*}
With this, we can bound the amount of points in time $t\in [\totalRounds]$ where at least one learner $l$ has a violation, i.e., $V(\totalRounds)$. In the worst case, all violations at all local learners occur at different time points, so that we can upper bound $V(\totalRounds)$ by the sum of local violations $V_\learner(\totalRounds)$ which is again upper bounded by the cumulative sum of drifts of all local models:
\[
V(\totalRounds)\leq \sum_{\learner=1}^\totalLearners V_\learner(\totalRounds) \leq \frac{1}{\sqrt{\divThreshold}}\sum_{\round=1}^\totalRounds\sum_{\learner=1}^\totalLearners \eta\loss(\model_{\round}^{\learner})=\frac{\eta}{\sqrt{\divThreshold}}\cummloss{\dynProt}{\totalRounds,\totalLearners}\enspace .
\]
\qed
\end{proof}
In the following theorem we bound the overall communication by combining this bound on the number of synchronizations with an analysis of the amount of bytes transfered per synchronization.
\begin{thm}
Let $\onlineAlgo=(\hilbertSpace,\widetilde{\uprule}, \loss)$ be an online learning algorithm with approximately loss-proportional update rule $\widetilde{\uprule}$ and $\|\model-\widetilde{\uprule}(\model,\sample,\truelabel)\|\leq\eta\loss(\model,\sample,\truelabel)$.
The amount of communication $C_\dynProt(\totalRounds,\totalLearners)$ of the dynamic protocol $\dynProt$ running $\onlineAlgo$ in parallel on $\totalLearners$ nodes until time $\totalRounds\in\N$ with divergence threshold $\divThreshold$ is bounded by
\begin{equation*}
\begin{split}
C_\dynProt(\totalRounds,\totalLearners)\leq \frac{\eta}{\sqrt{\divThreshold}}\cummloss{\dynProt}{\totalRounds,\totalLearners}\left(2\totalLearners \left|\overline{\SVs}_\totalRounds\right| B_\coeffs \right) + \totalLearners \left|\overline{\SVs}_{\totalRounds}\right| B_\sample
\end{split}
\end{equation*}
\end{thm}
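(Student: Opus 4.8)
The plan is to reuse the structure of the continuous-protocol bound, replacing the factor of $\totalRounds$ (the number of synchronizations performed by $\contProt$) with the number of synchronizations $V_\dynProt(\totalRounds)$ of the dynamic protocol, and then to substitute the bound on $V_\dynProt(\totalRounds)$ from Prop.~\ref{prop:numberViolations}. As in the continuous case, I would split the total communication into the bytes spent on \emph{support vector coefficients} and the bytes spent on \emph{support vectors}, since these two contributions scale differently with the synchronization schedule: the coefficients are re-sent in full at every synchronization, whereas under the trivial communication reduction strategy each support vector is transmitted at most once per channel regardless of how often synchronization occurs.

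First I would bound the coefficient communication. The dynamic protocol only exchanges models at the $V_\dynProt(\totalRounds)$ time points at which a local condition is violated; at every such synchronization each learner sends its full coefficient vector (Eq.~\ref{eq:bytesPerLearnerPerSynchNode}) and the coordinator returns the full coefficient vector of the average model (Eq.~\ref{eq:bytesPerLearnerPerSynchCoord}). Since at any time $|\SVs_\round^\learner|\le|\overline{\SVs}_\totalRounds|$ and $|\overline{\SVs}_\round|\le|\overline{\SVs}_\totalRounds|$, each learner contributes at most $2|\overline{\SVs}_\totalRounds|B_\coeffs$ per synchronization, so summing over the $\totalLearners$ learners and $V_\dynProt(\totalRounds)$ synchronizations yields a coefficient cost of at most $V_\dynProt(\totalRounds)\,2\totalLearners|\overline{\SVs}_\totalRounds|B_\coeffs$.

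Next I would bound the support vector communication. The key observation is that a support vector is transmitted on a given channel only the first time it becomes new to the receiver, so the total number of support vector transmissions is governed by the number of distinct support vectors and not by how often synchronization occurs. Because the dynamic protocol synchronizes on a subset of the rounds on which $\contProt$ would synchronize, its support vector communication is dominated by the same counting argument used for $\contProt$: the learner-to-coordinator transmissions sum to $|\overline{\SVs}_\totalRounds|B_\sample$ and the coordinator-to-learner transmissions add at most $(\totalLearners-1)|\overline{\SVs}_\totalRounds|B_\sample$, for a total of at most $\totalLearners|\overline{\SVs}_\totalRounds|B_\sample$, independent of $V_\dynProt(\totalRounds)$.

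Finally I would add the two contributions and invoke Prop.~\ref{prop:numberViolations} to replace $V_\dynProt(\totalRounds)$ by $\tfrac{\eta}{\sqrt{\divThreshold}}\cummloss{\dynProt}{\totalRounds,\totalLearners}$, which gives exactly the claimed bound. The main obstacle I anticipate is the support vector part: I must argue carefully that the continuous-protocol counting bound still applies when synchronizations are sparse, i.e. that accumulating several new support vectors between two synchronizations and sending them in one batch does not increase the total beyond $\totalLearners|\overline{\SVs}_\totalRounds|B_\sample$, and that this contribution genuinely stays outside the factor multiplying the loss. The coefficient part, by contrast, is routine once Prop.~\ref{prop:numberViolations} is available.
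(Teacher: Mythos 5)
Your proposal is correct and follows essentially the same route as the paper's proof: decompose the communication into coefficient bytes, bounded by $V_\dynProt(\totalRounds)\cdot 2\totalLearners|\overline{\SVs}_\totalRounds|B_\coeffs$ over the synchronization rounds, and support vector bytes, bounded by $\totalLearners|\overline{\SVs}_\totalRounds|B_\sample$ via the same once-per-channel counting argument as for the continuous protocol, then apply Prop.~\ref{prop:numberViolations}. The ``obstacle'' you flag about batched support vectors is dispatched in the paper exactly as you suggest---the exchanged support vector set depends only on the distinct vectors accumulated, not on the synchronization schedule.
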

\begin{proof}
Assume that at time $\totalRounds$, the dynamic protocol performs a synchronization. Then, similar to the argument for the continuous protocol, the support vector set at time $\totalRounds$ is similar for all learners and independent of the number of synchronization steps before. In particular, it is the same if a synchronization was performed in every time step. Thus, again the amount of bytes required for sending the support vectors is bounded by $\totalLearners \left|\overline{\SVs}_\totalRounds\right| B_\sample$. 
Let ${\mapping{\theta}{\N}{\{0,1\}}}$ be an indicator function such that $\theta(\round)=1$ if at time $\round$ the dynamic protocol performed a synchronization and $\theta(\round)=0$ otherwise. Then, the amount of bytes required to send all the support vector coefficients until time $\totalRounds$ is
\begin{equation*}
\begin{split}
\sum_{\round=1}^\totalRounds \theta(\round) \sum_{\learner=1}^\totalLearners\left( \left|\SVs_{\round}^{\learner}\right| + \left|\overline{\SVs}_t\right|\right)B_\coeffs\leq &\underbrace{\sum_{t=1}^T\theta(t)}_{= V_\dynProt(\totalRounds)}\sum_{l=1}^\totalLearners2|\overline{\SVs}_{\totalRounds}| B_\coeffs
\leq \underbrace{\frac{\eta}{\sqrt{\divThreshold}}\cummloss{\dynProt}{\totalRounds,\totalLearners}}_{\text{Prop.~\ref{prop:numberViolations}}}\left( 2\totalLearners |\overline{\SVs}_{\totalRounds}| B_\coeffs\right)\\
\end{split}
\end{equation*}
Together with the amount of bytes required for exchanging all support vectors this yields
$
C_\dynProt(\totalRounds,\totalLearners)\leq \frac{\eta}{\sqrt{\divThreshold}}\cummloss{\dynProt}{\totalRounds,\totalLearners}\left( 2\totalLearners |\overline{\SVs}_{\totalRounds}| B_\coeffs\right) + \totalLearners\left|\overline{\SVs}_\totalRounds\right| B_\sample
$.
\qed
\end{proof}
Note that the loss bounds for online learning algorithms are typically sub-linear in $\totalRounds$, e.g., optimal regret bounds for static target distributions are in $\mathcal{O}(\sqrt{\totalRounds})$. In these cases, the dynamic protocol has an amount of communication in $\mathcal{O}(\totalLearners^2\totalRounds\sqrt{\totalRounds})$ which is smaller than $\mathcal{O}(\totalLearners^2\totalRounds^2)$ of the continuously and periodic protocols by a factor of $\sqrt{\totalRounds}$. 

In the original case of linear models instead, the dynamic protocol only transmits $\totalLearners$ weight vectors of fixed size per synchronization. In this case the amount of communication per synchronization is bounded by a constant. If for an online learning algorithm $\onlineAlgo$ and the periodic protocol $\periodProt$ it holds that $\lossbound{\periodProt}{\totalRounds,\totalLearners}\leq\lossbound{\onlineAlgo}{\totalLearners\totalRounds}$, then by Thm.~\ref{thm:lossboundModelComp} it also holds that $\lossbound{\dynProt}{\totalRounds,\totalLearners}\leq\lossbound{\onlineAlgo}{\totalLearners\totalRounds}$. This implies that the dynamic protocol is adaptive. In the following corollary, we show that for linear models, the dynamic protocol is adaptive when using the Stochastic Gradient Descent algorithm.
\begin{crl}
The dynamic protocol $\dynProt=(\text{SGD},\syncop_\divThreshold)$ using Stochastic Gradient Descent $\text{SGD}$ with linear models is adaptive, i.e., 
\[
\cummcomm{\dynProt}{\totalRounds,\totalLearners}\in \bigO{ \totalLearners \lossbound{\text{SGD}}{\totalLearners\totalRounds} }
\]
\end{crl}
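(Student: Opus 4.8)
The plan is to read this corollary as the SGD instantiation of the general adaptivity argument that precedes it, so the proof is a composition of three facts rather than a fresh calculation. I would bound the total communication by the number of synchronizations multiplied by the cost of a single synchronization, then (i) observe that for linear models this per-synchronization cost does not grow with $\totalRounds$, (ii) use Proposition~\ref{prop:numberViolations} to bound the number of synchronizations by the cumulative loss, and (iii) use the consistency of the dynamic SGD protocol for linear models to bound that cumulative loss by the serial loss bound $\lossbound{\text{SGD}}{\totalLearners\totalRounds}$.

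First I would record that SGD with linear models is a $\gamma$-loss-proportional convex update rule that uses no compression, so $\compErr=0$ and $\widetilde{\uprule}=\uprule$, and that it satisfies the drift condition $\|\model-\uprule(\model,\sample,\truelabel)\|\leq\eta\loss(\model,\sample,\truelabel)$ required by Proposition~\ref{prop:numberViolations}, with $\gamma,\eta$ inherited from the prior SGD analysis. Next I would note that with linear models each learner transmits a single weight vector of fixed dimension $d$ per synchronization and the coordinator returns the averaged weight vector, so the communication incurred at any synchronization is $\bigO{\totalLearners}$, independent of $\totalRounds$ (this is precisely where the linear case departs from the support-vector case, whose per-synchronization cost grew with $|\overline{\SVs}_\totalRounds|$). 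Writing $V_\dynProt(\totalRounds)$ for the number of synchronizations, this gives $\cummcomm{\dynProt}{\totalRounds,\totalLearners}\in\bigO{\totalLearners\, V_\dynProt(\totalRounds)}$.

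I would then invoke Proposition~\ref{prop:numberViolations} to replace $V_\dynProt(\totalRounds)$ by $\tfrac{\eta}{\sqrt{\divThreshold}}\cummloss{\dynProt}{\totalRounds,\totalLearners}$; for a fixed threshold $\divThreshold$ the factor $\eta/\sqrt{\divThreshold}$ is a constant, whence $\cummcomm{\dynProt}{\totalRounds,\totalLearners}\in\bigO{\totalLearners\,\cummloss{\dynProt}{\totalRounds,\totalLearners}}$. Finally I would close the loop using $\cummloss{\dynProt}{\totalRounds,\totalLearners}\leq\lossbound{\dynProt}{\totalRounds,\totalLearners}$ together with the known consistency of SGD with linear models, i.e. $\lossbound{\dynProt}{\totalRounds,\totalLearners}\in\bigO{\lossbound{\text{SGD}}{\totalLearners\totalRounds}}$ (equivalently, applying Theorem~\ref{thm:lossboundModelComp} with $\compErr=0$ to the periodic protocol whose loss bound already satisfies $\lossbound{\periodProt}{\totalRounds,\totalLearners}\leq\lossbound{\text{SGD}}{\totalLearners\totalRounds}$). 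Substituting yields $\cummcomm{\dynProt}{\totalRounds,\totalLearners}\in\bigO{\totalLearners\,\lossbound{\text{SGD}}{\totalLearners\totalRounds}}$, which is exactly the adaptivity condition of Definition~\ref{def:efficiency}.

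The step I expect to be the main obstacle is the last one: ensuring that the cumulative loss $\cummloss{\dynProt}{\totalRounds,\totalLearners}$ feeding into Proposition~\ref{prop:numberViolations} is genuinely $\bigO{\lossbound{\text{SGD}}{\totalLearners\totalRounds}}$ and not larger. Theorem~\ref{thm:lossboundModelComp} only gives $\cummloss{\dynProt}{\totalRounds,\totalLearners}\leq\cummloss{\periodProt}{\totalRounds,\totalLearners}+\totalRounds\divThreshold/\gamma^2$, and the additive slack $\totalRounds\divThreshold/\gamma^2$ is linear in $\totalRounds$ whereas the serial bound is typically sublinear (e.g. $\bigO{\sqrt{\totalLearners\totalRounds}}$). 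With a constant $\divThreshold$ this slack would dominate, so one must either import the sharper consistency result for linear SGD from the prior work or let the threshold decay (e.g. $\divThreshold_\round=\round^{-1/2}$); in the decaying case the factor $1/\sqrt{\divThreshold}$ in Proposition~\ref{prop:numberViolations} grows, and one must verify that this growth of the per-step communication weight is still compensated by the decay of the loss so that the product stays $\bigO{\lossbound{\text{SGD}}{\totalLearners\totalRounds}}$. Reconciling the constant-threshold communication analysis with the threshold schedule required for consistency is the delicate point, and I would treat the cited consistency guarantee for SGD as the lever that resolves it.
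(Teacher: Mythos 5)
Your proposal matches the paper's proof essentially step for step: bound the per-synchronization cost by $\bigO{\totalLearners}$ since linear models are fixed-size weight vectors, multiply by the number of synchronizations from Proposition~\ref{prop:numberViolations}, and close with the consistency of dynamic SGD, $\cummloss{\dynProt}{\totalRounds,\totalLearners}\leq\lossbound{\text{SGD}}{\totalLearners\totalRounds}$, imported from the prior work. The delicate point you flag at the end---that Theorem~\ref{thm:lossboundModelComp} alone leaves an additive $\totalRounds\divThreshold/\gamma^2$ slack that could dominate a sublinear serial bound---is real, but the paper resolves it exactly as you propose, by citing the sharper consistency guarantee for linear SGD from~\cite{kamp2014communication} rather than rederiving it.
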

\begin{proof}
The amount of synchronizations of the dynamic protocol is bounded by $V(\totalRounds)$ (see Prop.~\ref{prop:numberViolations}). In each synchronization, each learner transmits one linear model, i.e., one weight vector of fixed size to the coordinator. The coordinator submits one averaged weight vector back to each learner. Thus, the amount of communication per synchronization is bounded by $\mathbf{c}_\totalLearners\in\N$, where $\mathbf{c}_\totalLearners\in\bigO{\totalLearners}$. Then, the total communication is bounded by
\[
\cummcomm{\dynProt}{\totalRounds,\totalLearners}\leq \mathbf{c}_\totalLearners\frac{\eta}{\sqrt{\divThreshold}}\cummloss{\dynProt}{\totalRounds,\totalLearners}\in\bigO{\totalLearners \cummloss{\dynProt}{\totalRounds,\totalLearners}}\enspace .
\]
The dynamic protocol retains the loss bound of Stochastic Gradient Descent~\cite{kamp2014communication}, i.e., $\cummloss{\dynProt}{\totalRounds,\totalLearners}\leq \lossbound{\text{SGD}}{\totalLearners\totalRounds}$
\qed
\end{proof}
Unfortunately, from Thm.~\ref{thm:lossboundModelComp} also follows that the dynamic protocol applied to kernelized online learning algorithms that do not bound the size of their models does not comply to the strict notion of adaptivity as given in Def.~\ref{def:efficiency}. That is, because
the model size and thus the size of each message to and from the coordinator can grow with $\totalRounds$. Nonetheless, the theorem guarantees that if the learners do not suffer loss anymore, the dynamic protocol reaches quiescence.

In order to make the dynamic protocol adaptive in the strict sense of Def.~\ref{def:efficiency}, the model size has to be bounded. For kernelized online learning in streams, several \defemph{model compression} techniques have been proposed~\cite{kivinen2004online,orabona2009bounded,wang2010online}. These techniques typically guarantee that the compression error is bounded, i.e., for the \defemph{compressed model} $\modelApprox$ it holds that $\norm{\model-\modelApprox}\leq\compErr$. From this directly follows that if the base algorithm uses a loss-proportional convex update rule $\uprule$, the compressed version is an approximately loss-proportional convex update rule $\widetilde{\uprule}$.

One approach to compressing the support vector expansion is to project a new support vector on the span of the remaining ones and thus avoid adding it to the support set. Another one is to truncate support vectors with small coefficients. 
For the projection approach (e.q., described in~\cite{orabona2009bounded}) the error bound is independent of the learning algorithm. However, there is no bound on the number of support vectors. Thus, even though the model size is reduced in practice, there is no formal bound on the model size. For the truncation approach, however,~\cite{kivinen2004online} have shown that an error bound as well as a bound on the number of support vectors can be achieved when using Stochastic Gradient Descent. Specifically, for a fixed model size of $\tau$ support vectors, they have shown that the compression error is bound by
$
\norm{\model-\modelApprox}\leq\compErr\in\bigO{\frac{1}{\lambda}(1-\lambda)^\tau}
$,
where $\lambda\in\R$ is the learning rate of the Stochastic Gradient Descent algorithm (SGD). Therefore, we can follow that the dynamic protocol with SGD using kernel models compressed by truncation is adaptive. Specifically for SGD, \cite{dekel/jmlr/2012} have shown that periodic synchronizations retain the serial loss bound of SGD. It is consistent in this setting, because the dynamic protocol in turn retains the loss bounds of any periodic protocol. Since it is both consistent and adaptive, the dynamic protocol is efficient.

\section{Discussion}
\label{sec:discussion}
\begin{figure*}[t]
	\centering
	\subfigure[]{\label{fig:financePerformance}\includegraphics[height=4.23cm]{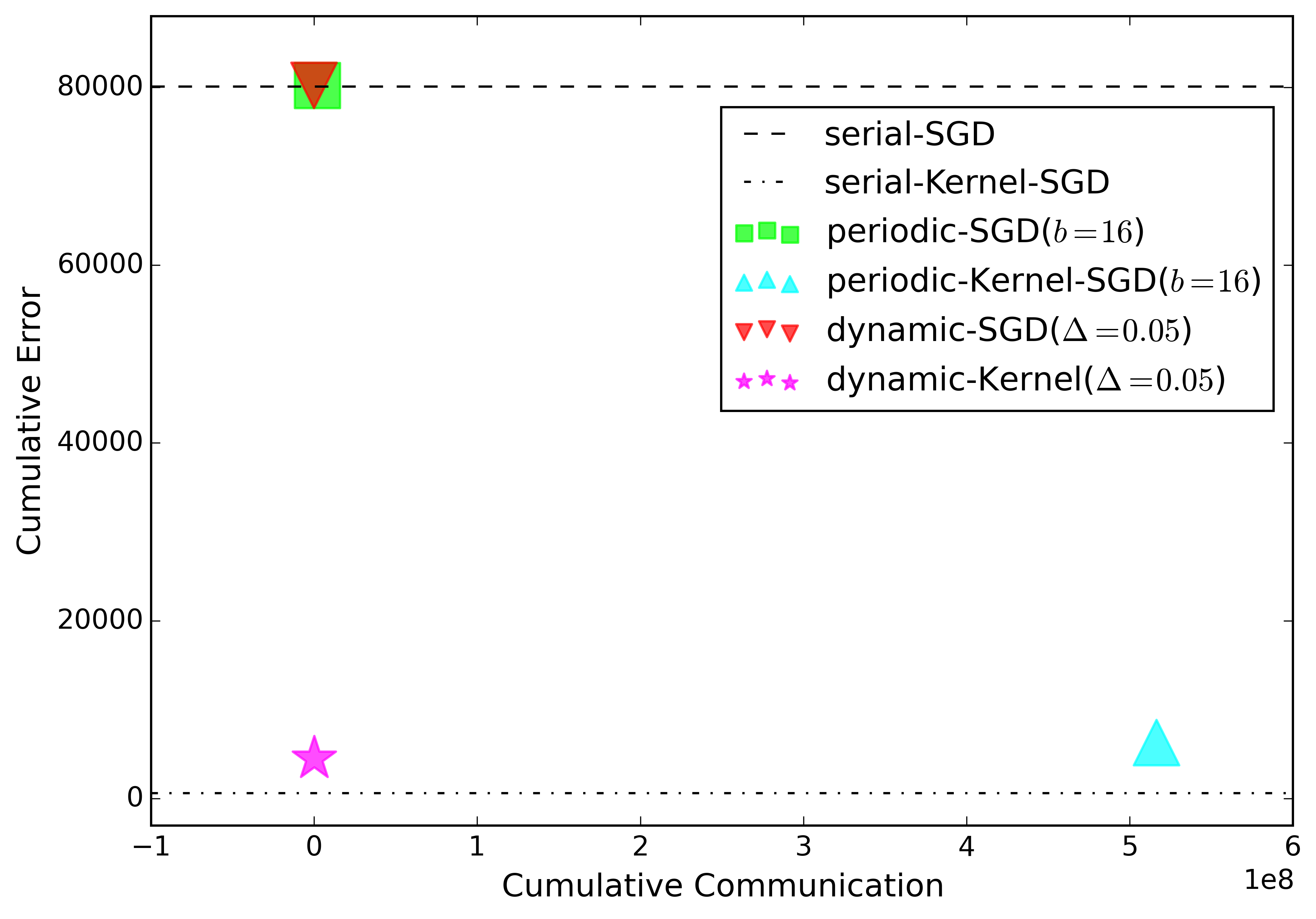}}\hfill%
	\subfigure[]{\label{fig:financeCommunication}\includegraphics[height=4.3cm]{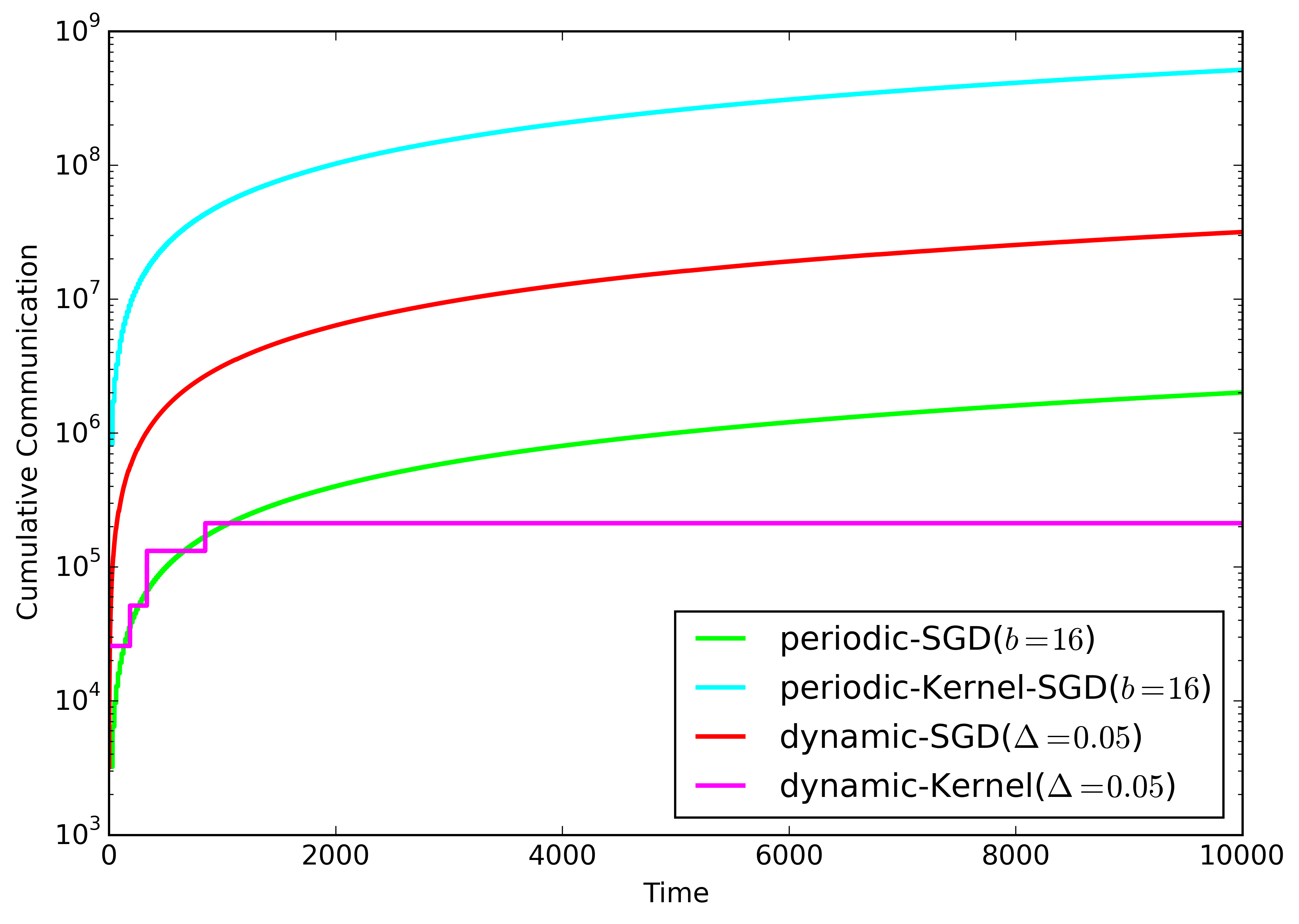}}
	\caption{
		\subref{fig:financePerformance} Trade-off between cumulative error and cumulative communication and~
		\subref{fig:financeCommunication} cumulative communication over time of the dynamic protocol versus a periodic protocol. $32$ learners perform a stock price prediction task using SGD (learning rate $\eta$ and regularization parameter $\lambda$ optimized over $200$ instances, with $\eta=10^{-10}$, $\lambda=1.0$ for the periodic protocol, and $\eta=1.0$, $\lambda=0.01$ for the dynamic protocol) updates, either with linear models or with non-linear models (Gaussian kernel with number of support vectors limited to $50$ using the truncation approach of~\cite{kivinen2004online}).
	}
	\label{fig:financeExp}
\end{figure*}
The dynamic protocol, extended to kernel methods, yields for the first time a theoretically efficient tool to learn non-linear models for distributed real-time services, in settings where communication is a major bottleneck. For that, it can employ online kernel methods together with model compression techniques, which reduce, or bound the number of support vectors. 
The efficiency of the protocol is characterized by a novel criterion that ties a tight loss bound to the required amount of communication---a criterion which is not satisfied by the state of the art of periodically communicating protocols.

While we provided a theoretical analysis, the advantage of the dynamic protocol in combination with kernel methods can also be shown in practice: Fig.~\ref{fig:financeExp} shows the results of an experiment on financial data~\cite{kamp2013beating}, where $32$ learners predicted the stock price of a target stock. We can see that for this difficult learning task linear models perform poorly compared to non-linear models using a Gaussian kernel function. Simultaneously, the communication required to periodically synchronize these non-linear models is larger than for linear models by more than two orders of magnitude. Using the dynamic protocol with kernel models we could reduce the error by an order of magnitude compared to using linear models (a reduction by a factor of $18$). At the same time, the communication is reduced by more than three orders of magnitude compared to the static protocol (by a factor of $2433$), which is yet an order of magnitude smaller than the communication when using linear models (by a factor of $10$). Moreover, within less than $2000$ rounds, the dynamic protocol reaches quiescence, as it is implied by the efficiency criterion.

A limit of the employed notion of efficiency is that it only takes into account the sum of messages but not the peak communication. In large data centers, where the distributed learning system is run next to other processes, the main bottleneck is the overall amount of transmitted bytes and a high peak in communication can often be handled by the communication infrastructure or evened out by a load balancer. In smaller systems, however, high peak communication can become a serious problem for the infrastructure and it remains an open problem how it can be reduced. Note that the frequency of synchronizations in a short time interval can actually be bounded by a trivial modification of the dynamic protocol: local conditions are only checked after a mini-batch of examples have been observed. Thus, the peak communication is upper bounded in the same way as with a periodic protocol, while still dynamically reducing the overall amount of communication.

When analyzing the reason for practical efficiency, model compression has proven to be a crucial factor, since storing and evaluating models with large numbers of support vectors can become infeasible---even in serial settings. In a distributed setting, transmitting large models furthermore induce high communication costs, which is aggravated by averaging local models, because the synchronized model consists of the union of all local support vectors.
For the model truncation approach of~\cite{kivinen2004online}, we have shown that the efficiency criterion is satisfied, but other model compression approaches might be favorable in certain scenarios.
Thus, an interesting direction for future research is to study the relationship between loss and model size of those model compression techniques in order to extend the results on efficiency.

Also, alternative approaches to ensuring constant model size could be investigated, e.g., a finite dimensional approximation of the feature map $\mapping{\Phi}{\sampleSpace}{\hilbertSpace}$ of a reproducing kernel Hilbert space $\hilbertSpace$, such as Random Fourier Features~\cite{rahimi2007random}. It remains an open problem how tight loss bounds combined with communication bounds can be derived in these settings.

Finding the right divergence threshold for the dynamic protocol, i.e., one that suits the desired trade-off between service quality and communication, is in practice a neither intuitive nor trivial task. The threshold can be selected using a small data sample, but the communication for a given threshold can vary over time and is also influenced by other parameters of the learner. Thus, another direction for future research is to investigate an adaptive divergence threshold. This could allow for a more direct selection of the desired trade-off between service quality (i.e., predictive performance) and communication.

\subsubsection*{Acknowledgments}
This research has been supported by the EU FP7-ICT-2013-11 under grant 619491 (FERARI).\pagebreak
\bibliographystyle{splncs03}
\bibliography{bibliography}

\begin{thebibliography}{10}
\providecommand{\url}[1]{\texttt{#1}}
\providecommand{\urlprefix}{URL }

\bibitem{barroso2013datacenter}
Barroso, L.A., Clidaras, J., H{\"o}lzle, U.: The datacenter as a computer: An
  introduction to the design of warehouse-scale machines. Synthesis lectures on
  computer architecture  8(3),  1--154 (2013)

\bibitem{boyd2004convex}
Boyd, S., Vandenberghe, L.: Convex optimization. Cambridge university press
  (2004)

\bibitem{crammer2006online}
Crammer, K., Dekel, O., Keshet, J., Shalev-Shwartz, S., Singer, Y.: Online
  passive-aggressive algorithms. The Journal of Machine Learning Research  7,
  551--585 (2006)

\bibitem{dekel/jmlr/2012}
Dekel, O., Gilad-Bachrach, R., Shamir, O., Xiao, L.: Optimal distributed online
  prediction using mini-batches. The Journal of Machine Learning Research  13,
  165--202 (2012)

\bibitem{deligiannakis2008bandwidth}
Deligiannakis, A., Kotidis, Y., Roussopoulos, N.: Bandwidth-constrained queries
  in sensor networks. The VLDB Journal  17(3),  443--467 (2008)

\bibitem{gabel/IPDPS/2014}
Gabel, M., Keren, D., Schuster, A.: Communication-efficient distributed
  variance monitoring and outlier detection for multivariate time series. In:
  Proceedings of the 28th International Parallel and Distributed Processing
  Symposium (IPDPS). pp. 37--47. IEEE (2014)

\bibitem{giatrakos2012prediction}
Giatrakos, N., Deligiannakis, A., Garofalakis, M., Sharfman, I., Schuster, A.:
  Prediction-based geometric monitoring over distributed data streams. In:
  Proceedings of the 2012 ACM SIGMOD International Conference on Management of
  Data. pp. 265--276 (2012)

\bibitem{heires2010budgeting}
Heires, K.: Budgeting for latency: If i shave a microsecond, will i see a 10x
  profit. Securities Industry News  22(1) (2010)

\bibitem{kamp2013beating}
Kamp, M., Boley, M., G{\"a}rtner, T.: Beating human analysts in nowcasting
  corporate earnings by using publicly available stock price and correlation
  features. In: 2013 IEEE 13th International Conference on Data Mining
  Workshops. pp. 384--390 (2013)

\bibitem{kamp2014communication}
Kamp, M., Boley, M., Keren, D., Schuster, A., Sharfman, I.:
  Communication-efficient distributed online prediction by dynamic model
  synchronization. In: Machine Learning and Knowledge Discovery in Databases,
  pp. 623--639. Springer (2014)

\bibitem{keren2012shape}
Keren, D., Sharfman, I., Schuster, A., Livne, A.: Shape sensitive geometric
  monitoring. IEEE Transactions on Knowledge and Data Engineering  24(8),
  1520--1535 (2012)

\bibitem{kivinen2004online}
Kivinen, J., Smola, A.J., Williamson, R.C.: Online learning with kernels. IEEE
  Transactions on Signal Processing  52(8),  2165--2176 (2004)

\bibitem{Lichman:2013}
Lichman, M.: {UCI} machine learning repository (2013),
  \url{http://archive.ics.uci.edu/ml}

\bibitem{mcdonald2009efficient}
Mcdonald, R., Mohri, M., Silberman, N., Walker, D., Mann, G.S.: Efficient
  large-scale distributed training of conditional maximum entropy models. In:
  Advances in Neural Information Processing Systems. pp. 1231--1239 (2009)

\bibitem{orabona2009bounded}
Orabona, F., Keshet, J., Caputo, B.: Bounded kernel-based online learning. The
  Journal of Machine Learning Research  10,  2643--2666 (2009)

\bibitem{predd2007distributed}
Predd, J.B., Kulkarni, S.R., Poor, H.V.: Distributed learning in wireless
  sensor networks. John Wiley \& Sons: Chichester, UK (2007)

\bibitem{rahimi2007random}
Rahimi, A., Recht, B.: Random features for large-scale kernel machines. In:
  Advances in Neural Information Processing Systems (NIPS). pp. 1177--1184
  (2007)

\bibitem{scholkopf2001learning}
Sch{\"o}lkopf, B., Smola, A.J.: Learning with Kernels: Support Vector Machines,
  Regularization, Optimization, and Beyond. MIT press (2001)

\bibitem{sharfman/tods/2007}
Sharfman, I., Schuster, A., Keren, D.: A geometric approach to monitoring
  threshold functions over distributed data streams. Transactions on Database
  Systems (TODS)  32(4) (2007)

\bibitem{wang2010online}
Wang, Z., Vucetic, S.: Online passive-aggressive algorithms on a budget. In:
  International Conference on Artificial Intelligence and Statistics. pp.
  908--915 (2010)

\bibitem{yuan2013real}
Yuan, S., Wang, J., Zhao, X.: Real-time bidding for online advertising:
  measurement and analysis. In: Proceedings of the Seventh International
  Workshop on Data Mining for Online Advertising. p.~3 (2013)

\end{thebibliography}
\end{document}